\documentclass[10pt,journal,compsoc]{IEEEtran}
\ifCLASSOPTIONcompsoc
\usepackage[nocompress]{cite}
\else
\usepackage{cite}
\fi

\ifCLASSINFOpdf
\else
\fi
\usepackage{amsmath,amsfonts}
\usepackage{algorithmic}
\usepackage{algorithm}
\usepackage{array}
\usepackage{textcomp}
\usepackage{stfloats}
\usepackage{url}
\usepackage{verbatim}
\usepackage{graphicx}
\usepackage{multirow}
\usepackage{booktabs}
\usepackage{subcaption}
\usepackage{enumitem}
\usepackage{color}
\usepackage{ragged2e} 
\usepackage{url}      

\usepackage{bm}
\usepackage{amsmath}
\usepackage{amssymb}
\usepackage{amsthm} 
\usepackage[most]{tcolorbox}

\newtheorem{proposition}{Proposition}[section]
\newtheorem{assumption}{Assumption}[section]

\newtheorem{definition}{Definition}[section]

\usepackage[table]{xcolor}

\definecolor{growthgreen}{RGB}{0, 160, 0}
\definecolor{SoftCyan}{RGB}{238,246,248}
\definecolor{SoftPeach}{RGB}{248,239,231}
\definecolor{SoftGreen}{RGB}{238,246,239}

\definecolor{iceblue}{RGB}{235, 246, 255}

\theoremstyle{remark} 

\def\BibTeX{{\rm B\kern-.05em{\sc i\kern-.025em b}\kern-.08em
		T\kern-.1667em\lower.7ex\hbox{E}\kern-.125emX}}

\definecolor{rblue}{rgb}{0,0.5,1}
\definecolor{awesome}{rgb}{1.0, 0.13, 0.32}
\definecolor{hollywoodcerise}{rgb}{0.96, 0.0, 0.63}
\definecolor{lasallegreen}{rgb}{0.03, 0.47, 0.19}
\definecolor{hanpurple}{rgb}{0.32, 0.09, 0.98}
\definecolor{green(pigment)}{rgb}{0.0, 0.65, 0.31}
\usepackage[pagebackref=false,breaklinks=true,colorlinks,bookmarks=false]{hyperref}
\hypersetup{colorlinks=true,linkcolor={red},citecolor={hanpurple},urlcolor={magenta}}

\renewcommand{\footnoterule}{%
  \kern 4pt                         
  \hrule width 0.4\textwidth height 0.4pt 
  \kern 4pt                        
}

\begin{document}

\title{When Detectors Forget Forensics: Blocking Semantic Shortcuts for Generalizable AI-Generated Image Detection}

\author{
	Chao~Shuai,
    Shaojing Fan,
    Chenlin Zou,
    Bin Gong,
    Weichen Lian,
    Xiuli Bi,
    Zhenguang~Liu{$^\dagger$}, \textit{Member, IEEE},
    Zhongjie~Ba, \textit{Member, IEEE},
    Kui~Ren, \textit{Fellow, IEEE}
	\IEEEcompsocitemizethanks{\IEEEcompsocthanksitem Chao Shuai, Chenlin Zou, Bin Gong, Weichen Lian, Zhongjie Ba, Zhenguang Liu, and Kui Ren are with the State Key Laboratory of Blockchain and Data Security, Zhejiang University, Hangzhou, China. Email: \href{chaoshuai@zju.edu.cn}{chaoshuai@zju.edu.cn}, 
    \href{chenlinzou@zju.edu.cn}{chenlinzou@zju.edu.cn},
    \href{gong_bin@zju.edu.cn}{gong\_bin@zju.edu.cn},
    \href{kevinbaylor@163.com}{kevinbaylor@163.com},
    \href{zhongjieba@zju.edu.cn}{zhongjieba@zju.edu.cn}, \href{liuzhenguang2008@gmail.com}{liuzhenguang2008@gmail.com}, \href{kuiren@zju.edu.cn}{kuiren@zju.edu.cn}. Shaojing Fan is with the National University of Singapore, Singapore. 
    Xiuli Bi is with the Chongqing University of Posts and Telecommunications, Chongqing, China. E-mail: \href{dcsfs@nus.edu.sg}{dcsfs@nus.edu.sg}, \href{bixl@cqupt.edu.cn}{bixl@cqupt.edu.cn}.
    \IEEEcompsocthanksitem {$^\dagger$}Corresponding author: Zhenguang Liu.}
}

\IEEEtitleabstractindextext{
\begin{abstract}
\justifying 
The growing realism of generative models has blurred the boundary between real and synthetic content, posing significant challenges to reliable AI-generated image detection. Although large-scale pre-trained Vision Foundation Models have advanced detection capability, their generalization to images from unseen generation pipelines remains inadequate. In this paper, we identify, for the first time, a key failure mechanism, termed \emph{semantic fallback}, wherein forensic fine-tuning fails to fully reshape the representation space. Consequently, the resulting representations remain organized along high-level semantic structures rather than manipulation-specific forensic cues. Building on this insight, we propose a \textbf{Geometric Semantic Decoupling (GSD)} framework, which explicitly suppresses semantically dominant directions, thereby promoting invariant forensic representations. Specifically, GSD leverages a frozen CLIP encoder to estimate the dominant semantic subspace via Singular Value Decomposition (SVD). It then suppresses the semantic components through a geometry-constrained formulation with the suppression strength adaptively modulated across samples and layers. We further introduce a mini-batch SVD approximation strategy that amortizes subspace estimation, achieving over a 15× reduction in computational overhead while preserving effectiveness. Finally, considering practical scenarios spanning both large-scale and online evaluation, we develop three inference protocols, batch, per-sample, and reference-based inference, and demonstrate that they induce consistent semantic decoupling, yielding a stable forgery-oriented feature manifold. Empirical results show that our method consistently achieves state-of-the-art performance across four benchmarks (38 datasets), with improvements from 93.4 to 96.3 in video-level AUC for face forgery detection and from 93.6 to 96.3 in accuracy for synthetic image detection. 



\end{abstract}

\begin{IEEEkeywords}
AI-generated Image, Vision Foundation Models, Geometric Decoupling, Generalization Detection.
\end{IEEEkeywords}}

\maketitle
\IEEEdisplaynontitleabstractindextext
\IEEEpeerreviewmaketitle

\IEEEraisesectionheading{\section{Introduction}\label{sec:introduction}}
\IEEEPARstart{T}{he} recent rise of advanced generative AI~\cite{midjourney, sd} has raised serious questions about the authenticity of digital content, making it increasingly difficult to tell what is real and what is artificially generated. This technological advancement has moved beyond a security concern and now results in serious real-world consequences. Recent incidents involving deepfake-based corporate scams and privacy violations underscore the fragility of the current digital ecosystem~\cite{arup2024, swift2024}. The wide spread of AI-generated content creates a general sense of skepticism and threatens the credibility of all media. Addressing this challenge requires the development of a universal and adaptive detection mechanism that can remain effective as AI generation techniques continue to evolve rapidly.

\begin{figure*}[!t]
  \centering
    \includegraphics[width=\linewidth]{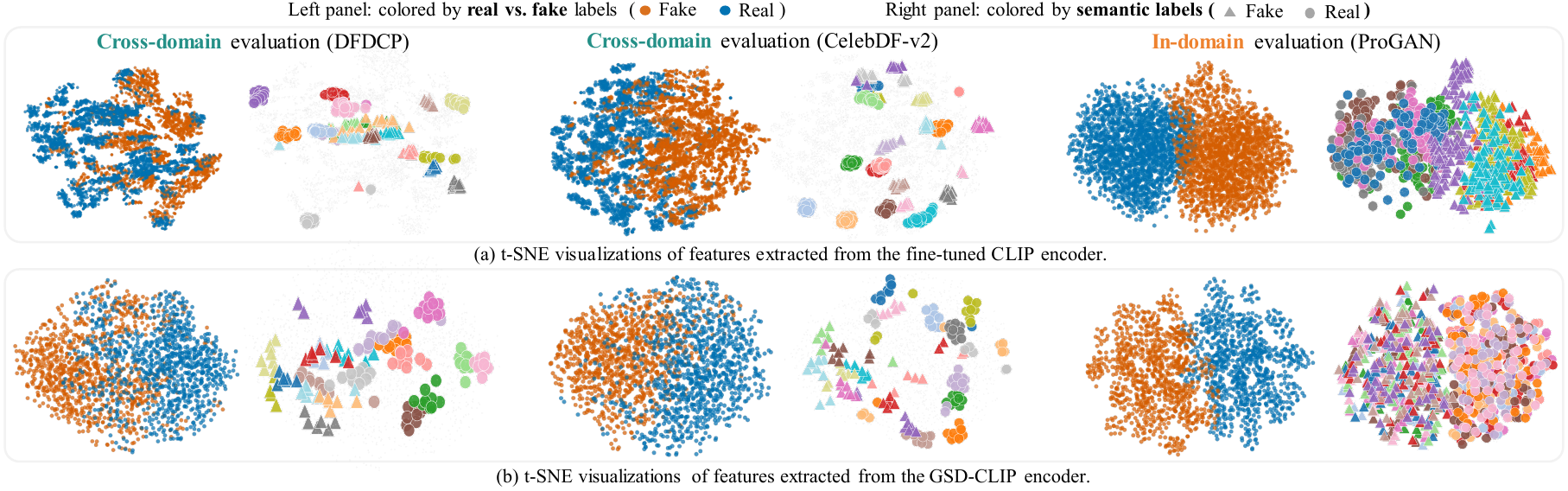}
    \caption{t-SNE~\cite{tsne} visualizations of feature representations extracted from the fine-tuned CLIP encoder and GSD-CLIP. 
The top row corresponds to the fine-tuned CLIP encoder, and the bottom row corresponds to GSD-CLIP. 
The three columns show results on the in-domain ProGAN~\cite{progan} dataset and two cross-domain benchmarks, CelebDF-v2~\cite{celebdf} and DFDCP~\cite{dfdcp}. 
For each dataset, the left panel is colored by forgery labels (real vs.\ fake), while the right panel is colored by semantic labels. Compared with the fine-tuned CLIP encoder, which retains compact semantic clusters across both in-domain and cross-domain settings, GSD-CLIP substantially weakens identity- or content-driven semantic grouping and yields a representation space more aligned with fake/real discrimination.}
    \label{sec1-fig1}
\end{figure*}

In response, the digital forensic community has undergone a significant methodological evolution, progressing from early statistical artifact analysis toward deep learning paradigms. A consensus has emerged among state-of-the-art detectors~\cite{effort,x2dfd,fakeradar,fcg,deepshield,vbsat}, which mainly leverage large-scale pre-trained Vision Foundation Models (VFMs), such as CLIP~\cite{clip}. These models provide robust feature representations that have significantly boosted detection accuracy on standard benchmarks. However, a persistent paradox remains: while these detectors achieve notable performance on seen training distributions, they suffer from catastrophic performance degradation when exposed to unseen generation techniques~\cite{deepfake, timely, x2dfd}. In this work, we posit that this failure arises from a fundamental conflict between the semantic-centric objective of pre-training and the artifact-centric objective of forensics. Supporting this view, prior observations~\cite{effort} show that semantic priors learned by foundation models concentrate in dominant representational directions, while transferable forensic artifacts occupy weaker, more domain-sensitive subspaces.

Based on the above, \textit{we hypothesize that when forensic artifacts are subtle or poorly transferable, the model falls back on the strong semantic priors (e.g., identity or object category) embedded in the foundation model, thereby overshadowing faint manipulation signals.} To test this hypothesis, we examine the generalization bottleneck in vision transformer--based deepfake detectors by comparing feature distributions from a frozen pre-trained CLIP encoder and its fully fine-tuned counterpart using t-SNE~\cite{tsne} visualization. Specifically, we conduct this analysis on ten datasets covering both face forgery and general synthetic-image detection scenarios, where semantic labels such as face identity or object category are available. For face forgery detection, we follow the standard training protocol on FaceForensics++~\cite{faceforensics++} and evaluate both in-domain and cross-domain datasets. For general synthetic-image detection, we follow the UniversalFakeDetect protocol and train on the ProGAN subset, then evaluate on both seen and unseen generator domains. Figure~\ref{sec1-fig1}(a) presents representative results, including the cross-domain face forgery benchmarks CelebDF-v2 and DFDCP, and the in-domain ProGAN results for synthetic-image detection. More results are provided in Appendix~\ref{secA-3}. From this analysis, we make the observations that the fine-tuned detector does not fully reshape its representation space around manipulation-specific evidence. Instead, a clear semantic organization remains visible across both in-domain and cross-domain settings: samples sharing the same identity or semantic category still form compact local groups in the embedding space even after fine-tuning. We term this phenomenon \textit{semantic fallback}, where the semantic structure inherited from CLIP pre-training is not removed by downstream forensic training, but persists as a dominant geometric prior in the learned representation.


\textit{How does semantic fallback affect the generalization of AI-generated image detection?} We address this question through both empirical and theoretical analyses in Section~\ref{sec3}. Empirically, we show that: (a) semantic fallback persists after forensic fine-tuning and becomes more pronounced on unseen domains; (b) detection failures are more strongly associated with samples that retain higher semantic occupancy in the inherited CLIP subspace; and (c) suppressing semantic components improves cross-domain performance in a linear-probe setting. Theoretically, we analyze the target-risk trade-off introduced by semantic suppression and derive the existence of a non-trivial optimal suppression strength that balances semantic interference removal with the preservation of discriminative forensic cues. These findings demonstrate that semantic fallback is a fundamental generalization bottleneck for VFM-based deepfake detectors, rather than a merely incidental property of the learned representation.


Motivated by these observations, we propose a \textbf{Geometric Semantic Decoupling (GSD)} framework, which constrains the detector to learn in the \textbf{semantic null space}. Specifically, we introduce a frozen CLIP encoder as a semantic anchor, estimate semantically dominant directions via Singular Value Decomposition (SVD)~\cite{svd}, and explicitly subtract the semantic component from the learned forensic representation. Our analysis in Section~3 shows that semantic fallback becomes more pronounced in deeper CLIP layers, while prior studies have suggested that shallow representations are more important for preserving low-level forgery artifacts. Motivated by this, we insert GSD into the last four transformer layers and design an adaptive semantic suppression strategy that adjusts the layer-wise suppression strength according to the semantic occupancy of the current forensic feature.
Furthermore, because estimating a semantic subspace via per-sample SVD is computationally expensive, we introduce a mini-batch-based approximation, termed \textit{Batch-SVD}, which replaces repeated sample-wise decompositions with a shared subspace estimate and reduces the computational cost by more than $15$ times. Finally, to support both batch evaluation and realistic streaming scenarios, we develop three inference protocols, namely batch inference, per-sample inference, and reference-based inference. Benefiting from the fact that GSD establishes a stable forgery-oriented manifold in the decoupled feature space during training, these protocols remain effective even though they estimate the semantic subspace differently, because they all operate under the same semantic geometry induced by the frozen CLIP encoder. 

Interestingly, after suppressing high-level semantic components, a clear structural change can be observed in the feature space. Fake images begin to express forgery-related cues more prominently, which gradually become the dominant factors shaping the representation, as evidenced by the increased dispersion of fake samples in the t-SNE visualization (see Figure~\ref{sec1-fig1}(b)). In contrast, real images, which lack forgery-specific patterns, remain relatively identity-clustered even after semantic suppression. In summary, our main contributions are:
\begin{itemize}
    \item To the best of our knowledge, we are the first to identify \textit{semantic fallback} as a key cause of generalization failure, where VFM-based detectors regress to pre-trained semantic priors (\textit{e.g.,} identity cues) in unseen domains. We further show that this failure is systematic and more pronounced on unseen AI-generated image detection, revealing a previously underexplored limitation of existing VFM-based detectors.

    \item To overcome semantic fallback, we introduce \textbf{Geometric Semantic Decoupling (GSD)}, which uses a frozen CLIP encoder to estimate the semantic subspace and geometrically suppress dominant semantic components in learned representations. In this way, GSD encourages the detector to focus on manipulation-specific forensic signals rather than high-level semantic cues. Moreover, we introduce an adaptive subtraction coefficient $\lambda$, which allows GSD to perform layer-aware, sample-adaptive semantic suppression according to the semantic occupancy of the current representations.

    \item We exploit the observation that the semantic subspace in CLIP is highly structured and stable, and introduce a mini-batch SVD approximation, termed \textit{Batch-SVD}, to replace computationally expensive per-sample subspace estimation, improving both training and inference efficiency by more than $15$ times. We further develop three complementary inference protocols, namely batch inference, per-sample inference, and reference-based inference, enabling GSD to support both large-scale batch evaluation and realistic streaming detection.
\end{itemize}
\section{Related work}
\label{sec2}
\label{rlk}
\subsection{Generalizable AI-generated Image Detection} 
Early research on AI-generated image detection primarily exploited manipulation-specific artifacts. For synthetic image detection, prior works often focus on generator-induced traces, such as frequency-spectrum irregularities~\cite{cnnspot,f3net,freqdebias,frequency}, local texture patterns~\cite{patchfor,spatial,two-branch}, and gradient-domain inconsistencies~\cite{lgrad}. For deepfake detection, methods commonly leverage face-specific manipulation cues, including boundary blending artifacts~\cite{face-x-ray,chen,lgrad} and temporal or identity inconsistencies~\cite{detection,multi,lips,ict,id-reveal,dual}. However, these supervised detectors often suffer from limited cross-domain generalization, as they tend to overfit dataset- or manipulation-specific patterns. To alleviate this issue, data synthesis strategies have been explored, such as generating blending-based ``pseudo-fakes'' to encourage learning more generic forensic boundaries  rather than artifacts tied to specific generation pipelines~\cite{sbi,prodet,fsgd}.

With the rise of large-scale pre-training, Vision Foundation Models (VFMs), \textit{e.g.,} CLIP~\cite{clip}, have become a dominant backbone for AI-generated image detection due to their rich open-world representations. Early studies demonstrate that frozen VFM features already provide strong baselines, particularly in synthetic image detection~\cite{univfd}. Building upon this, a series of works adapt these representations via parameter-efficient strategies. For synthetic image detection, methods such as FatFormer~\cite{fatformer}, C2P-CLIP~\cite{c2p}, and CLIPMoLE~\cite{clipmode} introduce adapters, prompt tuning, or Mixture-of-Experts mechanisms to better align features with detection objectives. For deepfake detection, approaches like Forensics Adapter~\cite{foradapter} and EFFORT~\cite{effort} explore adapter-based tuning and orthogonality constraints to preserve useful pre-trained knowledge while enhancing forgery-relevant cues. In video settings, deepfake detection further incorporates temporal modeling modules to capture cross-frame inconsistencies~\cite{fcg,vbsat,deepshield}. More recently, MLLM-based frameworks extend detection toward interpretable, reasoning-oriented paradigms~\cite{x2dfd,fakeshield}.

Despite improved robustness, VFM-based methods introduce a fundamental challenge. Vision-language pre-training is inherently driven by semantic alignment, encouraging representations to preserve high-level content such as object category, identity, and scene semantics. However, forgery detection relies on subtle manipulation traces that are weakly correlated with such semantics. As a result, under domain shift, detector representations can remain dominated by the semantic geometry inherited from pre-training, causing forensic evidence to be overshadowed by semantically salient but task-irrelevant factors. We term this phenomenon the \emph{semantic fallback} effect. To address it, we propose a geometry-driven semantic decoupling mechanism that explicitly suppresses semantically dominant components in VFM representations, thereby promoting more transferable forgery-oriented features.

\subsection{Disentanglement and Semantic Debiasing} 
Some studies have attempted to disentangle forensic cues from confounding semantic factors, such as identity, background content, and other high-level visual attributes. The common intuition behind these methods is that improving generalization requires detectors to reduce their reliance on semantically salient yet task-irrelevant information, and instead emphasize manipulation-relevant evidence. Some introduce explicit constraints or disentanglement objectives to encourage forgery-relevant and manipulation-invariant representations~\cite{cfm,udd, ucf,dffake}. For example, UCF~\cite{ucf} employs a dedicated content reconstruction branch to absorb high-level content features, thereby leaving the main branch more focused on forensic evidence. DiffusionFake~\cite{dffake} leverages frozen Stable Diffusion inversion to capture source- and target-related features for reconstruction, enabling the detector to learn richer and more disentangled representations that are more resilient to unseen forgeries. UDD~\cite{udd} weakens semantic dependencies through transformer token shuffling, forcing the network to rely less on stable semantic structure and more on manipulation traces.  Besides, ExposDe~\cite{exposde} adopts mutual-information-based regularization to encourage the encoding of complementary features from different dimensions.

Despite their methodological diversity, these approaches mitigate semantic bias only in an indirect manner, \emph{e.g.}, through auxiliary branches, reconstruction  priors, or soft regularization objectives. As a result, semantic interference is often addressed only indirectly. In contrast, our work departs from this line of research by explicitly examining how semantic priors are entangled with forensic evidence in the learned representation, and in doing so identifies \emph{semantic fallback} as a fundamental failure mechanism of VFM-based detection. Building on this insight, we estimate semantically dominant directions from a frozen CLIP encoder and remove them through geometric projection, thereby enforcing semantic suppression directly in feature space.

Among existing methods, EFFORT~\cite{effort} is particularly related to our work in that it also employs SVD~\cite{svd} to structure the feature space. However, its motivation is fundamentally different from ours. EFFORT assumes that the dominant principal components inherited from large-scale pre-training encode beneficial semantic priors and therefore should be preserved. It thus freezes the principal subspace and adapts only the residual components, using the orthogonal complement to learn forgery-related patterns while retaining the original semantic knowledge of the pre-trained model. In contrast, our empirical analysis reveals the opposite phenomenon: under domain shift, the semantically dominant directions inherited from vision-language pre-training can hinder transferable forensic discrimination by biasing the detector toward semantic fallback. Accordingly, rather than preserving the dominant semantic subspace, our method explicitly suppresses it.
\section{Semantic Fallback as a Bottleneck for AI-generated Image Detection}
\label{sec3}

\begin{table}[t]
\centering
\caption{Distribution-level semantic similarity gap $\Delta S$ measured on CLS features from different layers of the frozen CLIP encoder ($CLIP_{fz}$) and the fine-tuned CLIP encoder ($CLIP_{ft}$). A larger $\Delta S$ indicates a stronger identity-coupled semantic structure in the learned representation.} 
\label{sec3-tab1}
\renewcommand{\arraystretch}{0.85}
\resizebox{\linewidth}{!}{%
\begin{tabular}{lccccc}
\toprule
\textbf{Layer \textit{L}} & \textbf{2} & \textbf{4} & \textbf{8} & \textbf{12} & \textbf{24} \\
\midrule

\multicolumn{6}{c}{\textbf{FaceForensics++}} \\
\midrule
$CLIP_{fz}$ & 0.153 & 0.102 & 0.153 & 0.145 & 0.333 \\
$CLIP_{ft}$ & 0.152 & 0.110 & 0.142 & 0.136 & 0.317 \\
\midrule

\multicolumn{6}{c}{\textbf{Celeb-DF v2}} \\
\midrule
$CLIP_{fz}$ & 0.113 & 0.078 & 0.130 & 0.123 & 0.385 \\
$CLIP_{ft}$ & 0.111 & 0.085 & 0.127 & 0.115 & 0.398 \\
\midrule

\multicolumn{6}{c}{\textbf{DFDC}} \\
\midrule
$CLIP_{fz}$ & 0.176 & 0.156 & 0.204 & 0.164 & 0.392 \\
$CLIP_{ft}$ & 0.173 & 0.167 & 0.199 & 0.156 & 0.413 \\
\bottomrule
\end{tabular}%
}
\end{table}

Recent studies~\cite{ucf, udd, effort} have suggested that AI-generated image detectors are influenced not only by manipulation traces, but also by high-level semantic features. However, the mechanism underlying this phenomenon remains poorly understood: \emph{how do semantic priors enter the detection process, and why do they undermine forensic generalization under domain shift?} In this work, we argue that the generalization bottleneck lies in a residual semantic bias inherited from foundation-model pre-training. Specifically, even after downstream forensic fine-tuning, the learned representation may remain substantially organized by high-level semantic structure, such as identity, object category, or scene content, rather than being fully reconfigured around semantic-invariant forensic evidence. Consequently, when manipulation traces become weak, unstable, or less transferable under domain shift, the detector may regress to these semantically dominant directions and base its decisions on content priors rather than on intrinsic forgery cues. We refer to this failure mechanism as \emph{semantic fallback}.

A central difficulty, however, is that the mere existence of semantic structure in the feature space does not by itself imply a harmful effect on generalization. Semantic organization may simply coexist with forensic discrimination, without actively interfering with it. Therefore, before introducing our method, the issue must be resolved at two levels: \emph{(1) whether semantic fallback persists as a stable and quantifiable property of fine-tuned representations, rather than being an artifact of qualitative visualization; and (2) whether, once present, it is merely correlated with cross-domain failure or instead actively contributes to the degradation of transferable detection performance.} To answer these questions, we conduct a three-step analysis in this section. We first show that semantic fallback remains a persistent property of fine-tuned VFM representations and becomes more pronounced under cross-domain evaluation. We then move from correlation to intervention, demonstrating that samples with stronger semantic dominance are more prone to prediction failure, and that directly weakening the semantic component improves cross-domain discrimination even in a simple linear-probe setting. Finally, we introduce a coupled representation view together with a margin decomposition analysis, which provides a theoretical account of why semantic suppression is beneficial for generalization.

\subsection{Semantic Fallback in Fine-Tuned Representations}
We begin by asking whether forensic fine-tuning truly removes the semantic organization inherited from foundation-model pre-training. Figure~\ref{sec1-fig1} provides an initial indication that this is not the case: although the fine-tuned CLIP encoder exhibits visible real/fake separation, its feature space still retains clear content-aware local structure, with samples from the same identity remaining clustered. However, because such visualizations are inherently qualitative, a quantitative characterization is necessary to establish whether semantic fallback is a genuine property of the learned representation, and quantify the strength of identity-aware structure in the feature space.

\begin{definition}[Identity Similarity Gap]
Given a set of CLS features $\{\bm{f}_i\}_{i=1}^{N}$ with identity labels $\{y_i\}_{i=1}^{N}$, where $\bm{f}_i \in \mathbb{R}^{d}$ and $y_i \in \{1, \dots, C\}$, the identity similarity gap is defined as $\Delta S \triangleq S_{\mathrm{intra}} - S_{\mathrm{inter}}$,
where $S_{\mathrm{intra}} \triangleq \mathbb{E}\left[\cos(\bm{f}_i, \bm{f}_j)\mid y_i = y_j\right]$ and $S_{\mathrm{inter}} \triangleq \mathbb{E}\left[\cos(\bm{f}_i, \bm{f}_j)\mid y_i \neq y_j\right].$ A larger $\Delta S$ indicates that samples sharing the same identity are consistently more similar to each other than to samples from different identities, and therefore implies a stronger identity-coupled semantic structure in the representation space
\end{definition}


Using the identity similarity gap $\Delta S$ defined above, we evaluate both the frozen CLIP encoder ($\mathrm{CLIP}_{fz}$) and its fine-tuned counterpart ($\mathrm{CLIP}_{ft}$). Specifically, we extract the CLS token from layers $L\in\{2,4,8,12,24\}$ (layer 24 denotes the final layer) and compute $\Delta S$ on the in-domain FaceForensics++ dataset as well as two cross-domain benchmarks Celeb-DF v2 and DFDC (results in Table~\ref{sec3-tab1}). We find that: 1) the feature distributions extracted from every layer of both $\mathrm{CLIP}_{fz}$ and $\mathrm{CLIP}_{ft}$ exhibit highly consistent semantic structure across all datasets. This indicates downstream forensic fine-tuning does not remove the semantic structure inherited from CLIP pre-training; rather, such structure remains broadly preserved across layers in both in-domain and cross-domain settings. 2) the behavior at the final layer differs between in-domain and cross-domain evaluation. On the in-domain FaceForensics++ dataset, the final-layer $\Delta S$ decreases after fine-tuning (from 0.333 to 0.317), suggesting that the model partially weakens semantic priors  when familiar manipulation cues are available. By contrast, on the unseen Celeb-DF v2 and DFDC benchmarks, the final-layer $\Delta S$ is maintained or further increased after fine-tuning (from 0.385 to 0.398, and from 0.392 to 0.413, respectively). This suggests that cross-domain failure is not merely accompanied by weaker forgery cues, but by the continued dominance of pre-trained semantic structure.

This phenomenon runs counter to the original goal of general AI-generated image detection. A desirable detector should organize its representation space around manipulation-related evidence that is transferable across semantic content, rather than around identity- or content-specific semantics. In other words, if two images are synthesized by the same forgery algorithm but differ in subject identity, a robust detector should still extract highly similar forensic features from them, because the shared manipulation traces are the factors truly relevant to detection. From this perspective, the feature space of an ideal detector should exhibit weak dependence on high-level semantics, which would be reflected by a relatively low $\Delta S$. Therefore, the consistently non-trivial and even amplified $\Delta S$ observed in cross-domain settings indicates a clear mismatch between the current representation geometry and the desired behavior of a generalizable forensic model.

\begin{figure}[!t]
  \centering
    \includegraphics[width=0.95\linewidth]{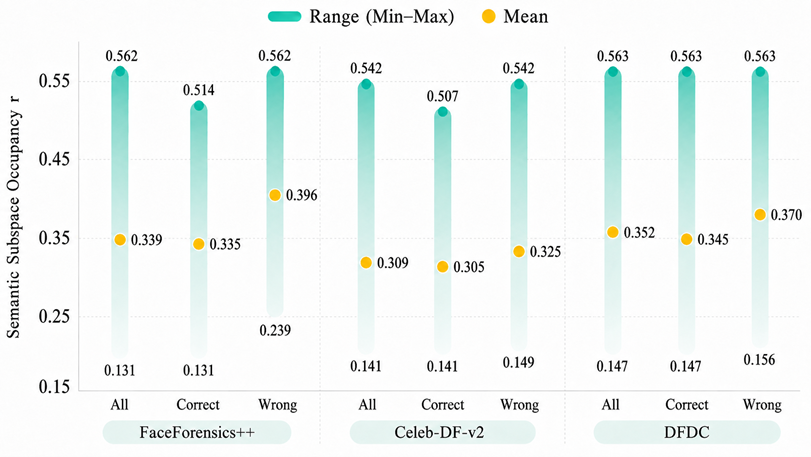}
    \caption{Semantic subspace occupancy of the fine-tuned CLIP on FaceForensics++, Celeb-DF v2, and DFDC. Across all datasets, wrongly classified samples exhibit consistently higher semantic subspace occupancy.}
    \label{sec3-fig2}
\end{figure}

\subsection{Semantic Fallback as a Cause of Cross-Domain Generalization Failure}
\subsubsection{How Semantic Fallback Relates to Prediction Results}
The analysis above establishes that semantic fallback persists as a stable property of fine-tuned representations. However, the existence of residual semantic structure alone does not yet imply that it actively impairs cross-domain detection. To examine whether prediction failures are more strongly associated with such residual semantic dominance, we introduce a sample-level metric termed semantic subspace occupancy.

\begin{definition}[Semantic Subspace Occupancy]
Let $\{\bm{s}_c\}_{c=1}^{C}$ denote the semantic features computed from the final-layer features of the frozen CLIP encoder on the FF++ training set, where $\bm{s}_c$ is obtained by averaging the frozen CLIP features of samples belonging to semantic group $c$. Inspired by between-class scatter analysis in Fisher-style discriminant methods~\cite{belhumeur1997eigenfaces}, we construct the between-semantics scatter matrix as
\[
\bm{S}_{b} \triangleq 
\sum_{c=1}^{C} n_c(\bm{s}_c-\bar{\bm{s}})(\bm{s}_c-\bar{\bm{s}})^\top,
\]
where $n_c$ is the number of samples in semantic group $c$ and 
$\bar{\bm{s}}=\frac{1}{N}\sum_{c=1}^{C}n_c\bm{s}_c$ is the global mean semantic feature. We perform eigen-decomposition on $\bm{S}_{b}$ and orthonormalize its top-$K$ eigenvectors to obtain a semantic basis $\bm{U}\in\mathbb{R}^{d\times K}$. Given a fine-tuned feature $\bm{f}_i$, its semantic subspace occupancy is defined as $r_i \triangleq 
\frac{\left\|\bm{U}\bm{U}^{\top}\bm{f}_i\right\|_2^2}
{\left\|\bm{f}_i\right\|_2^2}.$
\end{definition}

The semantic basis $\bm{U}$ captures the principal identity-related semantic directions inherited from CLIP pre-training, without involving any target-domain information. The occupancy score $r_i$ measures the proportion of feature energy lying inside this frozen semantic subspace. A larger $r_i$ indicates stronger alignment with the semantic geometry inherited from pre-training, suggesting that the feature remains more strongly constrained by the frozen semantic manifold. Compared with $\Delta S$, which captures pairwise identity-aware structure at the distribution level, $r_i$ provides a more direct sample-level indicator of semantic fallback. The results are shown in Figure~\ref{sec3-fig2}. For each dataset, we report the mean semantic subspace occupancy for all samples, correctly classified samples, and wrongly classified samples, together with the corresponding minimum and maximum values. A clear pattern emerges across all three datasets: wrongly classified samples consistently exhibit larger semantic subspace occupancy than correctly classified ones. This phenomenon is especially important in our context, because it suggests that \textit{failure cases are more strongly constrained by the semantic manifold inherited from the pre-trained model.} These findings provide more direct evidence for the semantic fallback hypothesis. When transferable forgery cues are insufficient, the detector does not fully reorganize the feature space around manipulation-specific evidence; instead, a larger portion of the feature representation remains anchored to the frozen semantic subspace. As a result, the model becomes less capable of separating real and fake samples based on universal forensic traces, and its generalization performance degrades accordingly. 

\subsubsection{How Semantic Removal Affects Detection Results}
We next perform a direct intervention to test whether the frozen semantic component indeed impairs transferable forensic discrimination. The key idea is simple: if the frozen semantic subspace acts as a source of semantic fallback, then removing this component from the probed representation should reduce identity-related interference and improve cross-domain fake/real discrimination. To this end, we use the semantic basis $\bm{U}$ estimated from the FF++ training set using the final-layer CLS features of the frozen CLIP encoder, as described in the previous subsection.

\begin{definition}[Controlled Semantic Projection Removal]
Given a semantic basis $\bm{U}\in\mathbb{R}^{d\times K}$ with orthonormal columns and a frozen feature $\bm{f}_i\in\mathbb{R}^{d}$, we define its partially de-semanticized representation as $
\bm{f}_i^{(\lambda)} \triangleq \bm{f}_i - \lambda \bm{U}\bm{U}^{\top}\bm{f}_i,$
where $\lambda \in [0,1]$ controls the degree of semantic subtraction.
\end{definition}
\noindent Here, $\bm{U}\bm{U}^{\top}\bm{f}_i$ denotes the component of $\bm{f}_i$ aligned with the frozen semantic basis, while $\bm{f}_i^{(\lambda)}$ retains the residual information after controlled removal of this component. When $\lambda=0$, the original frozen feature is preserved; when $\lambda=1$, the semantic projection is fully removed. By varying $\lambda$, we can examine in a continuous manner how semantic removal affects downstream detection performance.

 \begin{table}[t]
\centering
\caption{Cross-domain detection performance under different degrees of semantic removal. A linear probe is trained on features after partial semantic subtraction based on the frozen CLIP semantic basis. The results show that even simple semantic removal helps improve detection performance.}
\label{sec3-tab2}
\resizebox{\linewidth}{!}{%
\begin{tabular}{c|ccccc}
\toprule
Dataset & $\lambda=0$ & $\lambda=0.2$ & $\lambda=0.4$ & $\lambda=0.6$ & $\lambda=0.8$ \\
\midrule
FaceForensics++~\cite{faceforensics++} & 84.8 & 85.3 & 84.2 & 85.3 & 85.1 \\
Celeb-DF v2~\cite{celebdf}  & 66.4 & 67.4 & 66.9 & 66.8 & 67.8\\
DFDC~\cite{dfdc} & 65.7 & 67.0 & 66.5 & 65.8 & 66.2 \\
\bottomrule
\end{tabular}
}
\end{table}

Based on these transformed features, we train a linear probe on FF++ for two epochs and then evaluate it on the in-domain FF++ test set as well as two cross-domain benchmarks, Celeb-DF v2 and DFDC. The results are reported in Table~\ref{sec3-tab2}. Two observations can be drawn. First, semantic removal brings an immediate benefit to detection performance, especially in the cross-domain setting. Even a simple subtraction of the frozen semantic component leads to consistent gains on unseen datasets. For example, compared with $\lambda=0$, applying moderate semantic removal improves the performance on Celeb-DF v2 and DFDC, while keeping the in-domain FF++ performance largely stable. This result is important because it provides intervention-based evidence that the residual semantic component is not merely an accompanying property of the representation, but an active factor that interferes with transferable forensic discrimination. Second, the effect of semantic removal is not uniform across datasets: different target domains favor different subtraction strengths. Moderate semantic removal yields the best improvement, whereas stronger subtraction does not necessarily produce further gains. This suggests that the semantic and forensic components may not be perfectly disentangled in the learned representation. 

These results further support our central claim that semantic fallback constitutes an important cause of detection failure, particularly in cross-domain scenarios. More importantly, they provide a practical implication for the design of AI-generated image detectors based on powerful pre-trained vision foundation models (\textit{e.g.,} CLIP): \textit{the challenge is not whether to use strong pre-trained VFMs, but how to prevent their inherited semantic priors from dominating the forensic representation.} A robust detector should therefore avoid organizing its feature space primarily around high-level semantic content, and instead encourage alignment with transferable forgery traces that remain stable across semantic variations.

\subsection{A Representation View of Semantic Suppression}
\label{sec3-3}
The intervention results above suggest a central conclusion: although semantic suppression improves cross-domain detection, semantic information and forensic evidence are not perfectly disentangled in the learned representation, but instead remain partially coupled in feature space. This observation motivates the coupled-representation view developed below, which provides a principled explanation for why semantic suppression can improve generalization, and why its strength must be chosen with care.

\begin{assumption}[Coupled Representation]\label{ass:coupled}
Let $\bm{f} \in \mathbb{R}^{d}$ denote the learned feature representation of an input sample. We assume that $
\bm{f}
=
\bm{f}_{\mathrm{for}}
+
\bm{f}_{\mathrm{sem}}
+
\bm{f}_{\mathrm{shr}}
+
\bm{\varepsilon},
\label{eq:coupled_rep}$ where $\bm{f}_{\mathrm{for}}$ denotes a forensic-specific component that is predictive of the real/fake label and comparatively more stable across domains than semantic shortcut features, $\bm{f}_{\mathrm{sem}}$ denotes a semantic-dominant component that may induce domain-sensitive shortcuts, $\bm{f}_{\mathrm{shr}}$ denotes a shared component correlated with both semantic structure and task-relevant forensic evidence, and $\bm{\varepsilon}$ collects residual variation not captured by the preceding structured terms.
\end{assumption}

This decomposition is conceptual rather than uniquely estimated, and is used to characterize different roles of feature components under domain shift.
\begin{definition}[Semantic Suppression]\label{def:sso}
Let $\bm{P}_{\hat{\mathcal S}}=\bm{U}\bm{U}^{\top}$ denote the orthogonal projector onto an estimated semantic-dominant subspace $\hat{\mathcal S}$, where $\bm{U}$ contains an orthonormal basis of $\hat{\mathcal S}$. For a suppression strength $\lambda \in [0,1]$, define $
T_{\lambda}(\bm{f})
=
\left(
\bm{I}
-
\lambda \bm{P}_{\hat{\mathcal S}}
\right)
\bm{f}.
\label{eq:semantic_suppression}$
Here, $\hat{\mathcal S}$ is an estimated semantic-dominant subspace and may deviate from the ideal semantic subspace $\mathcal S^{\star}$ due to finite-sample estimation error, domain shift, or imperfect semantic supervision. Applying $T_{\lambda}$ to Eq.~\eqref{eq:coupled_rep} yields
\begin{align}
T_{\lambda}(\bm{f})
&=
\left(
\bm{I}
-
\lambda \bm{P}_{\hat{\mathcal S}}
\right)
\bm{f}_{\mathrm{for}}
+
\left(
\bm{I}
-
\lambda \bm{P}_{\hat{\mathcal S}}
\right)
\bm{f}_{\mathrm{sem}}
\nonumber\\
&\quad
+
\left(
\bm{I}
-
\lambda \bm{P}_{\hat{\mathcal S}}
\right)
\bm{f}_{\mathrm{shr}}
+
\left(
\bm{I}
-
\lambda \bm{P}_{\hat{\mathcal S}}
\right)
\bm{\varepsilon}.
\label{eq:suppressed_rep}
\end{align}
\end{definition}

\begin{proposition}[Margin Shift under Semantic Suppression]\label{prop:margin}
Let $h(\bm{f})=\bm{w}^{\top}\bm{f}$ be a linear classifier, and let
$m_{\lambda}=y\,\bm{w}^{\top}T_{\lambda}(\bm{f})$ denote the signed margin under semantic suppression, where $y\in\{-1,+1\}$ is the ground-truth label. Under Assumption~\ref{ass:coupled}, the margin admits the following decomposition:
\begin{align}
m_{\lambda}
&=
y\,\bm{w}^{\top}
\left(
\bm{f}_{\mathrm{for}}
+
\bm{f}_{\mathrm{sem}}
+
\bm{f}_{\mathrm{shr}}
+
\bm{\varepsilon}
\right)
\nonumber\\
&\quad
-
\lambda y\,\bm{w}^{\top}
\bm{P}_{\hat{\mathcal S}}
\left(
\bm{f}_{\mathrm{for}}
+
\bm{f}_{\mathrm{sem}}
+
\bm{f}_{\mathrm{shr}}
+
\bm{\varepsilon}
\right).
\label{eq:margin_decomp}
\end{align}
Equivalently,
\begin{equation}
m_{\lambda}-m_{0}
=
-\lambda y\,\bm{w}^{\top}
\bm{P}_{\hat{\mathcal S}}
\left(
\bm{f}_{\mathrm{for}}
+
\bm{f}_{\mathrm{sem}}
+
\bm{f}_{\mathrm{shr}}
+
\bm{\varepsilon}
\right),
\label{eq:margin_shift}
\end{equation}
where $m_0 = y\,\bm{w}^{\top}\bm{f}$ is the unsuppressed margin.
\end{proposition}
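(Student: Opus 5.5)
The plan is to prove Proposition~\ref{prop:margin} by direct algebra, using only linearity. The suppression map $T_{\lambda}$ of Definition~\ref{def:sso} is linear, and so are the classifier $h$ and the scalar multiplication by $y$. Nothing beyond Assumption~\ref{ass:coupled} and the definition of $\bm{P}_{\hat{\mathcal S}}$ is required. In particular, we never use that $\bm{P}_{\hat{\mathcal S}}$ is idempotent or symmetric, nor that $\hat{\mathcal S}$ coincides with $\mathcal S^{\star}$.

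\textbf{Step 1.} Substitute the definition of $T_{\lambda}$ into the margin:
\[
m_{\lambda}=y\,\bm{w}^{\top}(\bm{I}-\lambda\bm{P}_{\hat{\mathcal S}})\bm{f}.
\]
Distribute the matrix product to obtain
\[
m_{\lambda}=y\,\bm{w}^{\top}\bm{f}-\lambda y\,\bm{w}^{\top}\bm{P}_{\hat{\mathcal S}}\bm{f}.
\]
This uses only distributivity of matrix multiplication and the fact that $\lambda$ and $y$ are scalars, so they commute with $\bm{w}^{\top}$ and with $\bm{P}_{\hat{\mathcal S}}$.

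\textbf{Step 2.} Replace $\bm{f}$ in both terms by the decomposition $\bm{f}_{\mathrm{for}}+\bm{f}_{\mathrm{sem}}+\bm{f}_{\mathrm{shr}}+\bm{\varepsilon}$ from Assumption~\ref{ass:coupled}. This yields Eq.~\eqref{eq:margin_decomp} directly. If one prefers to start from the expanded form Eq.~\eqref{eq:suppressed_rep}, the same identity follows by applying $y\,\bm{w}^{\top}$ to each of its four summands and regrouping the $\bm{I}$ parts and the $-\lambda\bm{P}_{\hat{\mathcal S}}$ parts.

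\textbf{Step 3.} Set $\lambda=0$ in Step~1. Since $T_0=\bm{I}$, this gives $m_0=y\,\bm{w}^{\top}\bm{f}$, which is the first line of Eq.~\eqref{eq:margin_decomp}. Subtracting $m_0$ from $m_{\lambda}$ then gives Eq.~\eqref{eq:margin_shift}.

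There is no genuine obstacle; the only point needing care is bookkeeping. Assumption~\ref{ass:coupled} is an additive identity with no constraints on the components, so the decomposition holds for any choice of $\bm{w}$ and any (possibly misestimated) projector $\bm{P}_{\hat{\mathcal S}}$. Any interpretive claims, for example that the $\bm{f}_{\mathrm{sem}}$ term tends to increase the margin while the $\bm{f}_{\mathrm{for}}$ and $\bm{f}_{\mathrm{shr}}$ terms may decrease it, go beyond the proposition and would need additional assumptions on how $\hat{\mathcal S}$ aligns with the components. I would flag this explicitly rather than prove it here.
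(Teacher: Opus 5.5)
Your proof is correct and takes the same approach as the paper, which justifies the proposition in one line by citing the linearity of $T_{\lambda}$. Your three steps (distributing $(\bm{I}-\lambda\bm{P}_{\hat{\mathcal S}})$, substituting the additive decomposition from Assumption~\ref{ass:coupled}, and setting $\lambda=0$ to identify $m_0$) simply write that justification out in full.
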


\noindent This result follows directly from the linearity of $T_{\lambda}$. Proposition~\ref{prop:margin} shows that semantic suppression does not only attenuate the semantic-dominant contribution $\bm{P}_{\hat{\mathcal S}}\bm{f}_{\mathrm{sem}}$, but may also remove the shared component $\bm{P}_{\hat{\mathcal S}}\bm{f}_{\mathrm{shr}}$ and useful forensic signal if $\hat{\mathcal S}$ is imperfectly estimated. Thus, improved transfer reflects a trade-off between eliminating unstable semantic shortcuts and preserving discriminative evidence. To formalize this trade-off at the target-risk level, we consider the following local quadratic approximation.

\begin{assumption}[Target-Risk Trade-off]\label{ass:risk}
Let $\mathcal{R}_{\mathrm{tgt}}(\lambda)$ denote the target-domain risk under the suppressed representation $T_{\lambda}(\bm{f})$. Let $\bm{P}_{\mathcal S^\star}$ denote the ideal semantic projection operator, and let $\bm{P}_{\hat{\mathcal S}}$ denote its estimated counterpart. In the operating regime of interest, we assume that the target-domain risk admits the following second-order surrogate:
\begin{equation}
\mathcal{R}_{\mathrm{tgt}}(\lambda)
\approx
\mathcal{R}_{0}
+
(1-\lambda)^2 \Gamma_{\mathrm{sem}}
+
\lambda^2 \Gamma_{\mathrm{share}}
+
\lambda^2 \Gamma_{\mathrm{mis}},
\label{eq:risk_tradeoff}
\end{equation}
where
\begin{align}
\Gamma_{\mathrm{sem}}
&\triangleq 
\mathbb{E}_{(\bm{x},y)\sim\mathcal{D}_{\mathrm{tgt}}}
\left[
\omega_{\mathrm{sem}}(\bm{x},y)
\left\|
\bm{P}_{\mathcal S^\star}
\bm{f}_{\mathrm{sem}}(\bm{x})
\right\|_2^2
\right] > 0,
\label{eq:gamma_sem}
\\
\Gamma_{\mathrm{share}}
&\triangleq 
\mathbb{E}_{(\bm{x},y)\sim\mathcal{D}_{\mathrm{tgt}}}
\left[
\omega_{\mathrm{share}}(\bm{x},y)
\left\|
\bm{P}_{\hat{\mathcal S}}
\bm{f}_{\mathrm{shr}}(\bm{x})
\right\|_2^2
\right] > 0,
\label{eq:gamma_share}
\\
\Gamma_{\mathrm{mis}}
&\triangleq 
\mathbb{E}_{(\bm{x},y)\sim\mathcal{D}_{\mathrm{tgt}}}
\left[
\omega_{\mathrm{mis}}(\bm{x},y)
\left\|
\left(
\bm{P}_{\hat{\mathcal S}}
-
\bm{P}_{\mathcal S^\star}
\right)
\bm{f}(\bm{x})
\right\|_2^2
\right] \ge 0.
\label{eq:gamma_mis}
\end{align}
\end{assumption}

Here, $\Gamma_{\mathrm{sem}}$ represents the effective penalty of retaining semantic shortcuts, $\Gamma_{\mathrm{share}}$ represents the cost of suppressing shared task-relevant evidence, and $\Gamma_{\mathrm{mis}}$ represents the cost induced by subspace mismatch. The weights $\omega_{\mathrm{sem}}$, $\omega_{\mathrm{share}}$, and $\omega_{\mathrm{mis}}$ absorb local loss curvature and sample-dependent sensitivity. Eq.~\eqref{eq:risk_tradeoff} is a local second-order surrogate motivated by Eq.~\eqref{eq:margin_shift}, rather than a direct algebraic consequence. $\mathcal{R}_{0}$ denotes a residual base risk and should not be interpreted as $\mathcal{R}_{\mathrm{tgt}}(0)$. The role of Eq.~\eqref{eq:risk_tradeoff} is to isolate the leading-order benefit of attenuating semantic shortcuts from the leading-order cost of suppressing shared or misidentified task-relevant information.

\begin{proposition}[Optimal Suppression Strength]\label{prop:lambda_star}
Under Assumption~\ref{ass:risk}, the approximate target risk in Eq.~\eqref{eq:risk_tradeoff} is uniquely minimized at $\lambda^{\star}
=
\frac{\Gamma_{\mathrm{sem}}}
{
\Gamma_{\mathrm{sem}}
+
\Gamma_{\mathrm{share}}
+
\Gamma_{\mathrm{mis}}
}.
\label{eq:lambda_star}$
Moreover, $\lambda^{\star}\in(0,1)$ whenever $\Gamma_{\mathrm{sem}}>0$ and 
$\Gamma_{\mathrm{share}}+\Gamma_{\mathrm{mis}}>0$.
\end{proposition}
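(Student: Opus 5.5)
The plan is to treat the surrogate in Eq.~\eqref{eq:risk_tradeoff} as an exact scalar function of $\lambda$ and minimize it directly. Write
\[
g(\lambda) = \mathcal{R}_0 + (1-\lambda)^2\Gamma_{\mathrm{sem}} + \lambda^2\bigl(\Gamma_{\mathrm{share}}+\Gamma_{\mathrm{mis}}\bigr).
\]
Assumption~\ref{ass:risk} gives $\Gamma_{\mathrm{sem}}>0$, $\Gamma_{\mathrm{share}}>0$ and $\Gamma_{\mathrm{mis}}\ge 0$. Also, $\mathcal{R}_0$ is a constant independent of $\lambda$.

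The first step is to expand $g$ as a quadratic:
\[
g(\lambda)=\mathcal{R}_0+\Gamma_{\mathrm{sem}}-2\lambda\Gamma_{\mathrm{sem}}+\lambda^2\bigl(\Gamma_{\mathrm{sem}}+\Gamma_{\mathrm{share}}+\Gamma_{\mathrm{mis}}\bigr).
\]
The leading coefficient $A=\Gamma_{\mathrm{sem}}+\Gamma_{\mathrm{share}}+\Gamma_{\mathrm{mis}}$ is strictly positive, since $\Gamma_{\mathrm{sem}}>0$ and the other two terms are nonnegative. Hence $g''(\lambda)=2A>0$, so $g$ is strictly convex on $\mathbb{R}$ and has at most one minimizer.

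Setting $g'(\lambda)=-2\Gamma_{\mathrm{sem}}+2A\lambda=0$ gives the unique stationary point $\lambda^\star=\Gamma_{\mathrm{sem}}/A$. By strict convexity, this stationary point is the unique global minimizer over $\mathbb{R}$. Alternatively, one can complete the square to obtain
\[
g(\lambda)=A(\lambda-\lambda^\star)^2+\text{const},
\]
which makes uniqueness immediate.

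Next I check the range. Since $\Gamma_{\mathrm{sem}}>0$ and $A>0$, we get $\lambda^\star>0$. Also,
\[
1-\lambda^\star=\frac{\Gamma_{\mathrm{share}}+\Gamma_{\mathrm{mis}}}{A},
\]
which is strictly positive whenever $\Gamma_{\mathrm{share}}+\Gamma_{\mathrm{mis}}>0$. So $\lambda^\star\in(0,1)$. Because $\lambda^\star$ lies inside the admissible interval $[0,1]$ from Definition~\ref{def:sso}, the unconstrained minimizer is also the unique minimizer over $[0,1]$, and no boundary analysis is needed.

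The only real subtlety is interpretive rather than computational. The statement concerns the approximate risk in Eq.~\eqref{eq:risk_tradeoff}, not the true $\mathcal{R}_{\mathrm{tgt}}$. I will state explicitly that the minimization treats the surrogate as the objective, and that the $\Gamma$ terms do not depend on $\lambda$: they are defined via $\bm{f}_{\mathrm{sem}}$, $\bm{f}_{\mathrm{shr}}$, $\bm{f}$ and fixed projectors, and the weights $\omega$ are frozen at the operating point. Once these are taken as $\lambda$-independent constants, the argument is elementary.

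If one only assumed $\Gamma_{\mathrm{sem}}\ge 0$, the degenerate case $A=0$ would break uniqueness. The given positivity of $\Gamma_{\mathrm{sem}}$ rules this out.
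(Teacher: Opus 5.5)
Your proposal is correct and follows essentially the same route as the paper, which likewise drops $\mathcal{R}_0$, sets the derivative of $(1-\lambda)^2\Gamma_{\mathrm{sem}}+\lambda^2(\Gamma_{\mathrm{share}}+\Gamma_{\mathrm{mis}})$ to zero, and gets uniqueness from the positive second derivative $2(\Gamma_{\mathrm{sem}}+\Gamma_{\mathrm{share}}+\Gamma_{\mathrm{mis}})$. Your extra remarks make explicit two points the paper leaves implicit: the leading coefficient is strictly positive because $\Gamma_{\mathrm{sem}}>0$, and the interior minimizer is also the unique minimizer over $[0,1]$.
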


\begin{proof}
Let $\Gamma_{\mathrm{cost}}=\Gamma_{\mathrm{share}}+\Gamma_{\mathrm{mis}}$. Ignoring the constant $\mathcal{R}_0$, Eq.~\eqref{eq:risk_tradeoff} becomes
\begin{equation}
\widetilde{\mathcal{R}}_{\mathrm{tgt}}(\lambda)
=
(1-\lambda)^2\Gamma_{\mathrm{sem}}
+
\lambda^2\Gamma_{\mathrm{cost}}.
\end{equation}
Taking the derivative with respect to $\lambda$ and setting it to zero gives $
-2(1-\lambda)\Gamma_{\mathrm{sem}}
+
2\lambda\Gamma_{\mathrm{cost}}
=
0,$
which yields the expression for $\lambda^{\star}$. Since the second derivative is
$2(\Gamma_{\mathrm{sem}}+\Gamma_{\mathrm{cost}})>0$, the minimizer is unique. The condition $\lambda^\star\in(0,1)$ follows directly from $\Gamma_{\mathrm{sem}}>0$ and $\Gamma_{\mathrm{cost}}>0$.
\end{proof}

Proposition~\ref{prop:lambda_star} formalizes the central implication of the coupled-representation view: semantic suppression improves transfer when it removes target-unstable semantic shortcuts, but overly aggressive suppression becomes harmful once it erases shared or misidentified task-relevant information. This decomposition therefore suggests the existence of a non-trivial optimal suppression strength, denoted by $\lambda^\star$, that balances the removal of semantic nuisance against the preservation of shared discriminative evidence.

\section{Methodology}
\begin{figure*}[t]
\centering
\includegraphics[width=0.97\linewidth]{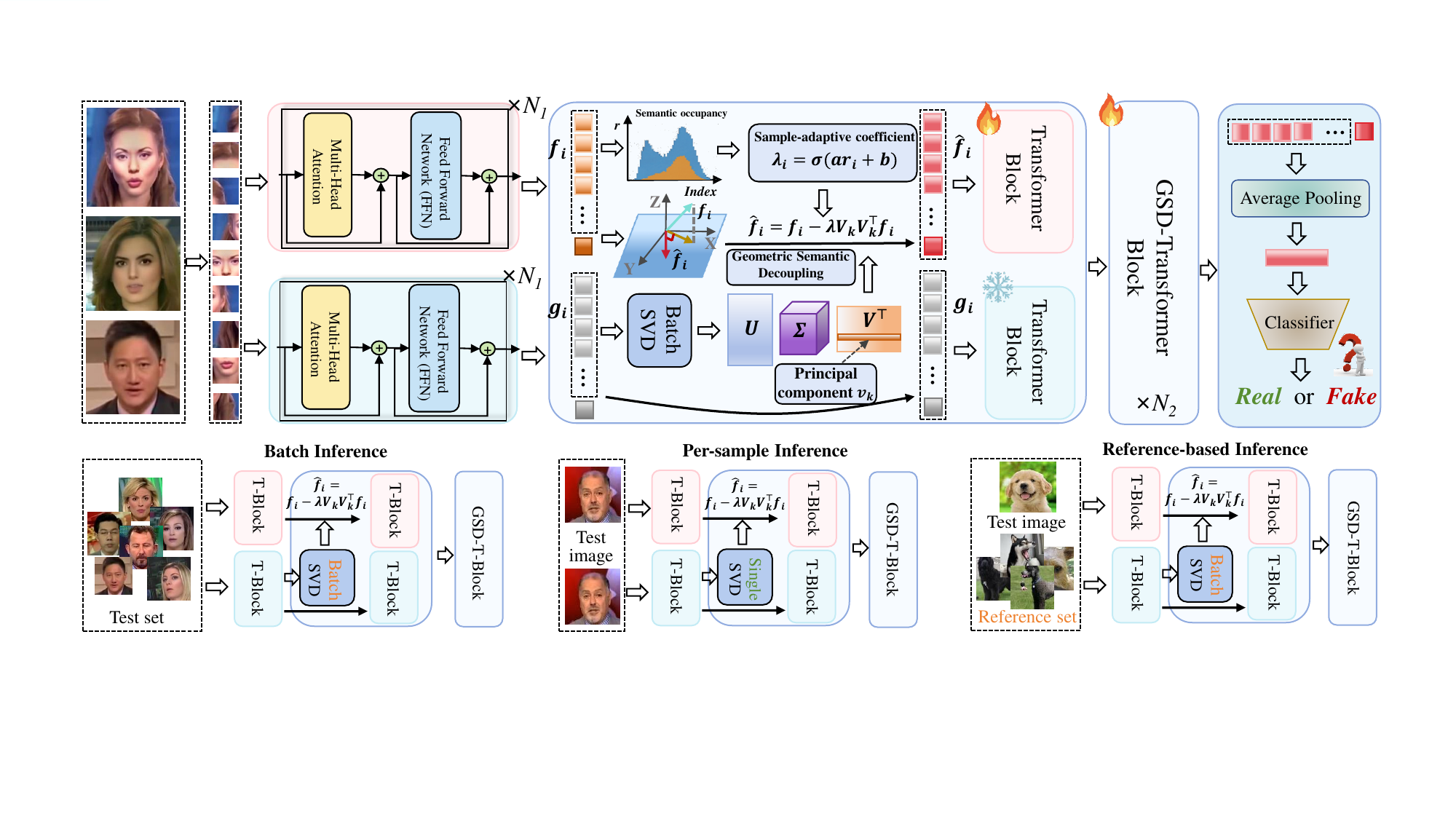}
\caption{Overview of the proposed detection framework with Geometric Semantic Decoupling (GSD). During training, a trainable forensic branch and a frozen semantic anchor branch, both initialized from the same CLIP backbone, are processed in parallel. At selected transformer layers, the frozen branch estimates semantically dominant directions via Batch-SVD, and the GSD module adaptively suppresses the corresponding components in the trainable features to produce semantically decoupled representations.}
\label{sec4-fig1}
\end{figure*}

The analyses in Section~\ref{sec3} reveal that detectors built upon powerful pre-trained vision-language models exhibit a \emph{semantic fallback}: under domain shift, their representations remain strongly biased toward the semantic structure inherited from pre-training, which interferes with transferable forensic discrimination. And thus, we introduce a \textbf{Geometric Semantic Decoupling (GSD)} module. Specifically, since the undesired bias originates from the pre-trained semantic subspace, we leverage a frozen CLIP encoder to provide a stable estimate of this semantic structure, and explicitly suppress its contribution in the trainable representation. In this way, the detector is encouraged to reorganize its feature space around manipulation-related evidence rather than high-level semantic content.

\subsection{Overview}
\label{sec4-1}
The overall framework is illustrated in Figure~\ref{sec4-fig1}. Our detector consists of a trainable forensic branch and a frozen semantic anchor branch, both initialized from the same pre-trained CLIP backbone. The frozen branch is used to preserve the semantic geometry inherited from pre-training, while the trainable branch learns features for real/fake discrimination. At selected transformer layers, the proposed GSD module estimates semantically dominant directions from the frozen branch and suppresses the corresponding components in the trainable features, yielding a semantically decoupled representation for downstream detection.

A key property of GSD is that the suppression strength is adaptive rather than fixed, governed by the semantic dominance of each sample and modulated by learnable layer-specific parameters. This design allows the model to apply stronger decoupling when semantic interference is severe and milder correction otherwise. Building on this adaptive mechanism, we first consider an ideal formulation where the semantic subspace is estimated at the instance level. However, such per-sample SVD estimation introduces substantial overhead in practice, so we derive Batch-SVD as an efficient batch-level approximation, which amortizes semantic subspace estimation across samples and enables scalable optimization. We further develop three inference strategies. Batch Inference targets large-scale evaluation, where Batch-SVD is directly applied to each mini-batch for efficiency. Per-sample Inference aligns with online detection scenarios, using Single-SVD to construct a sample-specific semantic subspace when inputs must be processed independently. Finally, to mitigate the cost of per-sample estimation, we introduce Reference-based Inference, which approximates the semantic subspace using a reusable semantic reference set. Importantly, both theoretical analysis and empirical results show that all three strategies induce closely aligned forgery-oriented feature manifolds, ensuring that practical approximations preserve the core effect of semantic decoupling.


\subsection{Geometric Semantic Decoupling with Single-Sample SVD}
\label{sec:method_single_svd}

The analysis in Section~\ref{sec3} shows that semantically dominant components can overshadow forgery-specific cues and degrade cross-domain discrimination. This observation suggests that improving generalization requires not only adapting a powerful vision foundation model to the detection task, but also explicitly suppressing the semantic directions that dominate the learned representation. To this end, the key question becomes how to identify the semantically dominant directions in a structured and data-dependent manner. Our solution is to estimate such directions directly from the feature geometry of a frozen semantic reference model. Specifically, we introduce a frozen CLIP encoder as a \emph{semantic anchor}. The motivation is that large-scale vision-language pre-training endows CLIP with rich and stable semantic structure, including high-level content variations related to identity, object category, and scene semantics. Since this frozen encoder is not affected by downstream forensic fine-tuning, it preserves the original semantic geometry of the pre-trained model and therefore provides a suitable reference for estimating semantic-dominated directions.

Given this semantic anchor, we further introduce singular value decomposition (SVD) as a geometric tool to extract the principal directions of semantic variation. For the $i$-th sample at transformer layer $l$, let $\bm{f}_{i,l} \in \mathbb{R}^{(N+1)\times d}$ denote the token representation from the trainable forensic branch, where $N$ is the number of patch tokens and $d$ is the hidden dimension. Let $\bm{g}_{i,l} \in \mathbb{R}^{(N+1)\times d}$ denote the corresponding representation from the frozen CLIP anchor. Since patch tokens retain richer spatial semantic structure than the global CLS token, we use the non-CLS tokens of the frozen branch to estimate semantic directions. Denoting the non-CLS token matrix by $\bar{\bm{P}}_{i,l} \in \mathbb{R}^{N\times d}$, we perform singular value decomposition as $
\bar{\bm{P}}_{i,l} = \bm{U}_{i,l}\bm{\Sigma}_{i,l}\bm{V}_{i,l}^{\top}.$ The top-$k$ right singular vectors, denoted by $\bm{V}^{(k)}_{i,l}$, capture the dominant directions of variation in the frozen semantic representation. In our framework, these directions serve as an estimate of the semantically dominant subspace for the current sample.

To determine the appropriate strength of semantic suppression, we build on the empirical observation in Section~\ref{sec3} that wrongly classified samples consistently exhibit higher semantic-subspace occupancy than correctly classified ones. This observation suggests that the extent to which a sample remains confined to the frozen semantic manifold is closely related to its detection difficulty. Therefore, rather than applying a uniform correction to all samples, we introduce an explicit measure of semantic dominance to quantify how much of the current trainable representation lies in the semantic subspace estimated from the frozen anchor. Specifically, given the semantic basis $\bm{V}^{(k)}_{i,l}$, we define the semantic occupancy of the $i$-th sample at layer $l$ as
\begin{equation}
r_{i,l}
=
\frac{
\|\overbrace{\bm{f}_{i,l}\bm{V}_{i,l}^{(k)}(\bm{V}_{i,l}^{(k)})^{\top}}^{\bm{f}_{i,l,\parallel}}\|_F^2
}{
\|\bm{f}_{i,l}\|_F^2
}.
\end{equation}
where $\bm{f}_{i,l,\parallel}$ denotes the semantic component of the trainable representation, obtained by projecting $\bm{f}_{i,l}$ onto the dominant directions estimated from the frozen CLIP anchor. A larger $r_{i,l}$ indicates that more representation energy lies in the semantic subspace, corresponding to a stronger semantic fallback effect.

The empirical findings in Section~\ref{sec3} further suggest that semantic suppression should not be controlled by a fixed subtraction ratio. On the one hand, Table~\ref{sec3-tab2} shows that different degrees of semantic removal lead to different detection performance, indicating that semantic decoupling is beneficial only when applied with an appropriate strength. On the other hand, Table~\ref{sec3-tab1} shows that the strength of semantic coupling varies across layers, especially near the top of the encoder. These observations suggest that the subtraction strength should be adaptive to both the current sample and the network depth. We therefore adopt a layer-aware and sample-adaptive subtraction coefficient. The final semantically decoupled representation is then obtained by
\begin{equation}
\hat{\bm{f}}_{i,l}
=
\bm{f}_{i,l}
-
\underbrace{
\lambda_{\max}\cdot
\sigma\!\left(
a_l r_{i,l}+b_l
\right)
}_{\lambda_{i,l}}
\bm{f}_{i,l,\parallel},
\end{equation}
where $\sigma(\cdot)$ is the sigmoid function, $\lambda_{\max}$ denotes the maximum subtraction strength, and $a_l$ and $b_l$ are learnable parameters associated with layer $l$. In this way, samples with higher semantic occupancy receive stronger suppression, while different layers are allowed to calibrate the degree of semantic removal according to their semantic sensitivity.

This formulation can be understood as a geometry-guided adaptive correction: the trainable representation is explicitly pushed away from semantically dominant directions inherited from the frozen foundation model, while the amount of correction is jointly modulated by sample-specific semantic occupancy and layer-specific semantic sensitivity. After applying GSD at the selected transformer layers, the detector is optimized end-to-end using the standard classification objective:
\begin{equation}
\mathcal{L}_{\mathrm{cls}}
=
-\frac{1}{B}
\sum_{i=1}^{B}
\log
\frac{
\exp\left(\bm{w}_{t_i}^{\top}\hat{\bm{f}}_i + b_{t_i}\right)
}{
\sum_{c=0}^{1}
\exp\left(\bm{w}_{c}^{\top}\hat{\bm{f}}_i + b_{c}\right)
},
\label{eq:cls_loss}
\end{equation}
where $\hat{\bm{f}}_i$ denotes the representation of the $i$-th sample,
$t_i\in\{0,1\}$ is its ground-truth class label, and $\bm{w}_c$ and $b_c$ denote
the classifier weight and bias for class $c$, respectively.

\subsection{Batch-SVD Approximation for Efficient Training}
\begin{table}[t]
\centering
\renewcommand{\arraystretch}{0.9}
\caption{Per-image runtime comparison of Batch-SVD and Single-SVD.
The $ratio$ denotes the percentage of total computation time spent on SVD operations. Batch-SVD significantly reduces the per-image cost compared with Single-SVD for both training and inference.}
\label{sec4-tab1}
\resizebox{\linewidth}{!}{
\begin{tabular}{lccc}
\toprule
Setting & SVD Time (ms) & Ratio (\%) & Speedup \\
\midrule
\multicolumn{4}{c}{\textbf{Training (per image)}} \\
\midrule
Batch-SVD & $5.43 \pm 0.02$ & $32.72 \pm 0.09$ & $\sim$15.6$\times$ \\
Single-SVD & $84.81 \pm 0.02$ & $88.42 \pm 0.07$ & 1$\times$ \\
\midrule
\multicolumn{4}{c}{\textbf{Inference (per image)}} \\
\midrule
Batch-SVD & $3.37 \pm 0.02$ & $50.45 \pm 0.05$ & $\sim$15.2$\times$ \\
Single-SVD & $51.13 \pm 0.03$ & $93.90 \pm 0.06$ & 1$\times$ \\
\bottomrule
\end{tabular}
}
\end{table}

The Single-SVD formulation in Section~\ref{sec:method_single_svd} provides a direct way to estimate semantically dominant directions from the frozen CLIP anchor and to suppress their influence on the trainable forensic representation. However, applying a separate SVD to every sample at every selected transformer layer is computationally expensive, especially for large backbones such as ViT-L/14, where GSD is inserted into multiple layers and optimized end-to-end. As shown in Table~\ref{sec4-tab1}, the SVD operation alone accounts for over $88\%$ of the total training time and over $93\%$ of the total inference time under the Single-SVD formulation, making it impractical for efficient optimization. To make GSD computationally feasible, we introduce a Batch-SVD approximation that estimates the semantic subspace jointly at the mini-batch level. The central idea is to replace repeated per-sample decompositions with a single decomposition over the aggregated frozen representations in the current batch. In this way, semantically dominant directions are still estimated from the frozen anchor branch, but the cost of subspace estimation is amortized across all samples in the mini-batch.

Specifically, at transformer layer $l$, consider a mini-batch of $B$ samples. For the $i$-th sample, let $\bar{\bm{P}}_{i,l} \in \mathbb{R}^{N\times d}$ denote the non-CLS token matrix extracted from the frozen semantic anchor.  We concatenate all frozen patch-token matrices along the token dimension and perform a single singular value decomposition:
\begin{equation}
\bar{\bm{M}}_l
=
\begin{bmatrix}
\bar{\bm{P}}_{1,l} \\
\bar{\bm{P}}_{2,l} \\
\vdots \\
\bar{\bm{P}}_{B,l}
\end{bmatrix}
=
\bm{U}^{(b)}_l\bm{\Sigma}^{(b)}_l(\bm{V}^{(b)}_l)^\top,
\qquad
\bar{\bm{M}}_l \in \mathbb{R}^{BN\times d}.
\end{equation}
We then use the top-$k$ right singular vectors of $\bm{V}^{(b)}_l$ to form a shared semantic basis for the entire mini-batch at layer $l$, denoted by $\bm{V}^{(b,k)}_l \in \mathbb{R}^{d\times k}$.

Given this batch-level semantic basis, the semantic occupancy and the adaptive decoupling coefficient for each sample are computed in the same manner as in Section~\ref{sec:method_single_svd}, except that the sample-specific basis $\bm{V}^{(k)}_{i,l}$ is replaced by the shared batch-level basis $\bm{V}^{(b,k)}_l$. Accordingly, the final semantically decoupled representation of the $i$-th sample is given by $
\tilde{\bm{f}}_{i,l}
=
\bm{f}_{i,l}
-
\lambda_{i,l}\bm{f}_{i,l,\parallel},$ where $\bm{f}_{i,l,\parallel}$ denotes the projection of $\bm{f}_{i,l}$ onto the shared semantic subspace spanned by $\bm{V}^{(b,k)}_l$. This approximation preserves the essential mechanism of GSD in two aspects. First, the semantic directions are still estimated from the frozen CLIP anchor rather than from the trainable branch itself, so the decoupling process continues to be guided by the original semantic geometry inherited from pre-training. Second, although the semantic basis is shared within each mini-batch, the actual subtraction strength remains sample-adaptive through $r_{i,l}$, which allows different samples to receive different levels of semantic suppression according to their own semantic dominance. In this sense, Batch-SVD approximates the semantic subspace estimation step, but preserves the core geometry-guided and sample-aware nature of GSD.

The practical advantage of Batch-SVD is substantial. Under a practical setting of 10,000 images with a batch size of 128 on a single NVIDIA A100 (80GB) GPU, with GSD inserted into the last four transformer layers, replacing repeated per-sample decompositions with a single decomposition per layer and mini-batch significantly reduces the computational burden of semantic subspace estimation. As shown in Table~\ref{sec4-tab1}, Batch-SVD reduces the per-image SVD cost by more than $15\times$ compared with Single-SVD in both training and inference, while lowering the proportion of total computation spent on SVD from $88.42\%$ to $32.72\%$ during training and from $93.90\%$ to $50.45\%$ during inference. These results show that Batch-SVD provides an effective efficiency--accuracy trade-off, making GSD practical for large-scale optimization without changing its semantic-anchor-based decoupling mechanism.

\begin{figure*}[t]
\centering
\includegraphics[width=0.97\linewidth]{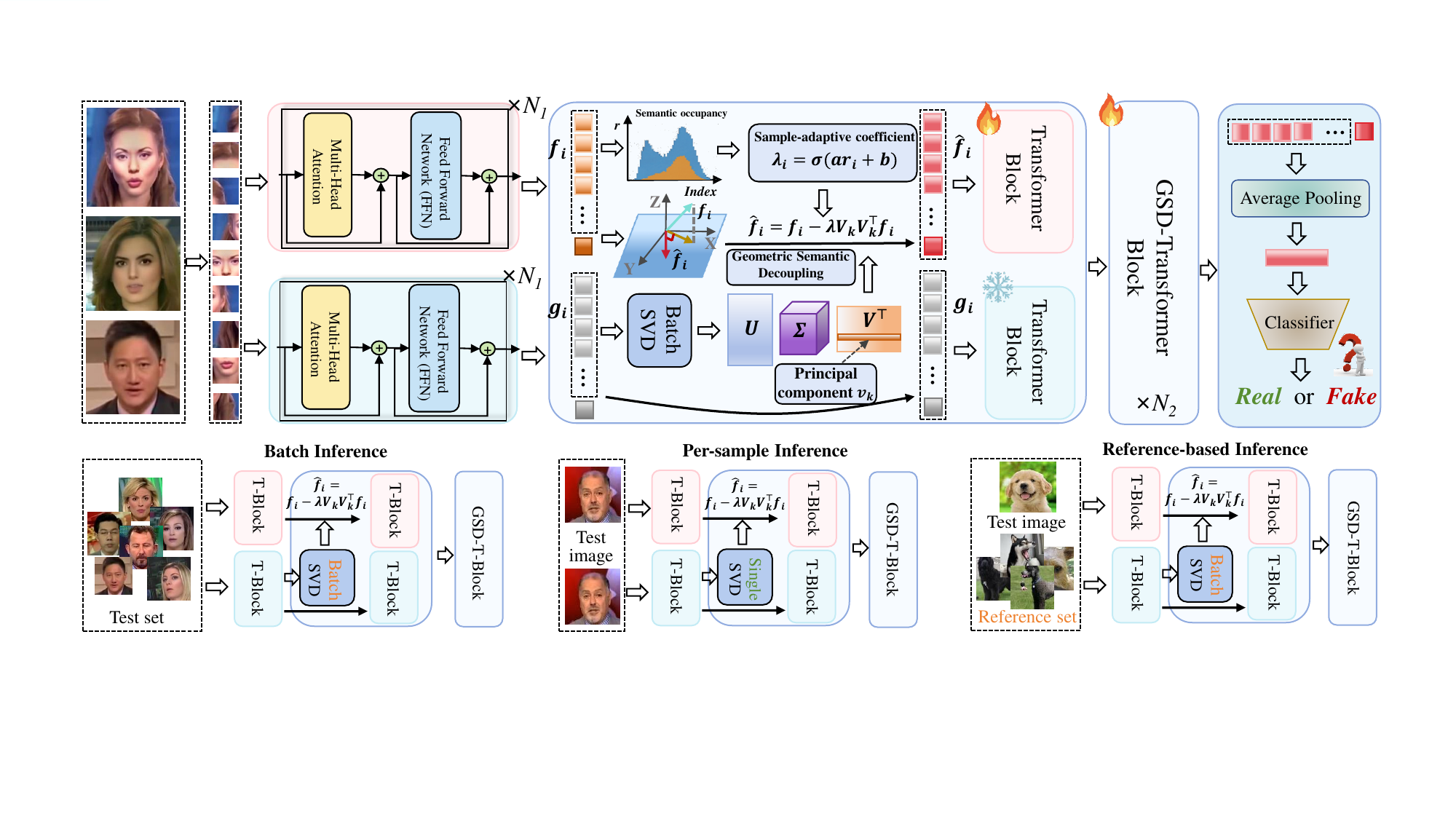}
\caption{During inference, the trained detector can be deployed with three strategies: \emph{Batch Inference} for efficient dataset-level evaluation, \emph{Per-sample Inference} for online image-level detection with sample-specific semantic estimation, and \emph{Reference-based Inference} for a more efficient approximation based on a semantic reference set.}
\label{sec4-fig2}
\end{figure*}

\subsection{Inference and Deployment}
\label{sec:inference}
\subsubsection{Inference Protocol in Real-world Detection}
In Figure~\ref{sec4-fig2}, we consider three inference protocols for GSD, designed for two common scenarios: \emph{dataset-level benchmark evaluation} and \emph{practical online deployment}. In benchmark settings, test images are typically processed in mini-batches, which naturally supports batch-wise semantic estimation. In contrast, in real-world deployment, inputs often arrive sequentially in a streaming manner, requiring each image to be processed independently. To accommodate both scenarios while balancing accuracy and efficiency, we consider \emph{Batch Inference}, \emph{Per-sample Inference}, and \emph{Reference-based Inference}.

\textbf{Batch Inference.}
For standard benchmark evaluation on datasets such as FF++, Celeb-DF v2, and DFDC, we directly adopt the Batch-SVD formulation used during training. At each selected layer, the frozen semantic anchor estimates a shared semantic basis from the current mini-batch, and each sample is then decoupled using its own adaptive suppression coefficient. This protocol is computationally efficient and well aligned with large-scale dataset-level evaluation.

\textbf{Per-sample Inference.}
For online or streaming detection, batch-level semantic estimation may be unavailable, since each input arrives independently. In this case, we use the Single-SVD formulation introduced in Section~\ref{sec:method_single_svd}. For each test image, the frozen anchor estimates a sample-specific semantic subspace from its token embeddings, and the trainable representation is decoupled accordingly. This protocol is conceptually faithful to the original design and is particularly suitable for image-level deployment.

\textbf{Reference-based Inference.}
Although Per-sample Inference provides precise sample-wise semantic estimation, it remains computationally expensive because it requires an SVD operation for every test image, as shown in Table~\ref{sec4-tab1}. To reduce this cost, we further introduce a reference-based approximation strategy. Specifically, we construct a semantic reference set and use the frozen CLIP features of these reference samples to estimate a shared semantic subspace. During inference, instead of performing SVD for each input, we project the test representation onto this reference subspace and suppress the semantic-dominant components accordingly. This strategy is particularly attractive in practical scenarios where semantically similar reference samples can be retrieved or prepared in advance.



\subsubsection{Theoretical Justification of Inference Compatibility}
To theoretically justify the compatibility of these inference protocols with a detector trained under Batch-SVD, we analyze their differences through the lens of subspace perturbation. The effect of switching the inference protocol can be reduced to the discrepancy between the corresponding estimated semantic subspaces. When these subspaces remain sufficiently close, the resulting decoupled representations and detection scores are accordingly well aligned.

\begin{assumption}[Shared Semantic Geometry]\label{ass:shared_geometry}
Let $P^{\star}$ denote the latent projector onto the semantic-dominant subspace induced by the frozen CLIP anchor. We assume that the projectors estimated by Batch Inference, Per-sample Inference, and Reference-based Inference, denoted by $P_{\mathit{batch}}$, $P_{\mathit{single}}$, and $P_{\mathit{ref}}$, respectively, are all perturbations of the same latent semantic geometry. Specifically, they satisfy $
\|P_{\alpha}-P^{\star}\| \leq \delta_{\alpha},
\alpha \in \{\mathit{batch},\mathit{single},\mathit{ref}\},
\label{eq:shared_geometry}$
for some bounded perturbation levels $\delta_{\alpha}$.
\end{assumption}

\begin{proposition}[Inference Gap under Projector Perturbation]\label{prop:inference_gap}
Let $T_{\lambda}^{(\alpha)}(\bm{f})
=
(I-\lambda P_{\alpha})\bm{f}
\label{eq:protocol_transform}$
denote the decoupled feature produced by inference protocol 
$\alpha \in \{\mathit{batch},\mathit{single},\mathit{ref}\}$.
Under Assumption~\ref{ass:shared_geometry}, for any two protocols $\alpha$ and $\beta$, their estimated projectors satisfy $\|P_{\alpha}-P_{\beta}\|
\le
\delta_{\alpha}+\delta_{\beta}.
\label{eq:projector_gap}$
Consequently, the resulting feature discrepancy is bounded by
\begin{equation}
\bigl\|
T_{\lambda}^{(\alpha)}(\bm{f})
-
T_{\lambda}^{(\beta)}(\bm{f})
\bigr\|
\le
\lambda
(\delta_{\alpha}+\delta_{\beta})
\|\bm{f}\|.
\label{eq:feature_gap}
\end{equation}
Moreover, if the downstream score function $h$ is $L$-Lipschitz, then the corresponding prediction-score discrepancy satisfies
\begin{equation}
\bigl|
h(T_{\lambda}^{(\alpha)}(\bm{f}))
-
h(T_{\lambda}^{(\beta)}(\bm{f}))
\bigr|
\le
L\lambda
(\delta_{\alpha}+\delta_{\beta})
\|\bm{f}\|.
\label{eq:score_gap}
\end{equation}
\end{proposition}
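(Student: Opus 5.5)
The proof is a chain of three elementary estimates: the triangle inequality in operator norm, then submultiplicativity, then the Lipschitz property of $h$. Throughout, $\|\cdot\|$ on projectors is the operator norm induced by the vector norm on features. If features are token matrices, as in Section~\ref{sec:method_single_svd}, we take the Frobenius norm on features. In that case $\|\bm{f}P\|_F \le \|P\|_{2}\|\bm{f}\|_F$, so the spectral norm on projectors is the compatible choice. This convention must be fixed once, and Assumption~\ref{ass:shared_geometry} read in the same norm.

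\textbf{Step 1: projector gap.} For any two protocols $\alpha,\beta$, insert the latent projector $P^\star$ and apply the triangle inequality:
\begin{equation*}
\|P_\alpha-P_\beta\| \le \|P_\alpha-P^\star\|+\|P^\star-P_\beta\| \le \delta_\alpha+\delta_\beta ,
\end{equation*}
using Assumption~\ref{ass:shared_geometry} twice. No property of projectors is needed here beyond the assumed closeness to $P^\star$.

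\textbf{Step 2: feature gap.} By linearity, the identity terms cancel:
\begin{equation*}
T_\lambda^{(\alpha)}(\bm{f})-T_\lambda^{(\beta)}(\bm{f}) = -\lambda\,(P_\alpha-P_\beta)\bm{f}.
\end{equation*}
Since $\lambda\ge 0$, absolute homogeneity and the operator-norm inequality give
\begin{equation*}
\|T_\lambda^{(\alpha)}(\bm{f})-T_\lambda^{(\beta)}(\bm{f})\| \le \lambda\,\|P_\alpha-P_\beta\|\,\|\bm{f}\| .
\end{equation*}
Step 1 then yields Eq.~\eqref{eq:feature_gap}.

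\textbf{Step 3: score gap.} The $L$-Lipschitz property of $h$ applied to the two decoupled features gives
\begin{equation*}
|h(T_\lambda^{(\alpha)}(\bm{f}))-h(T_\lambda^{(\beta)}(\bm{f}))| \le L\,\|T_\lambda^{(\alpha)}(\bm{f})-T_\lambda^{(\beta)}(\bm{f})\| .
\end{equation*}
Chaining with Step 2 yields Eq.~\eqref{eq:score_gap}.

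\textbf{Main obstacle.} The only real subtlety is the model-level interpretation. In GSD the coefficient $\lambda_{i,l}$ depends on the occupancy $r_{i,l}$, which itself depends on the projector, and decoupling is applied across several layers. The proposition as stated sidesteps both issues because it uses a fixed $\lambda$ and a single transform. I would note this explicitly as a remark rather than prove it. Extending the result would require showing that $\lambda(r)$ is Lipschitz in $P$; this holds since the sigmoid is $\tfrac14$-Lipschitz and $r$ is Lipschitz in $P$ for $\bm{f}\neq 0$. One would also need to propagate layerwise Lipschitz constants through the network, which would enlarge the constant but keep the linear dependence on $\delta_\alpha+\delta_\beta$.
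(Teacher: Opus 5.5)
Your proposal is correct and matches the paper's proof step for step: the triangle inequality through $P^\star$, then cancelling the identity terms to get $\lambda(P_\beta-P_\alpha)\bm{f}$ and applying the operator-norm bound, then the Lipschitz property of $h$. Your added remarks are sound caveats that the paper's proof does not address, but the statement as written does not need them: fixing a compatible norm for token-matrix features, and the fact that the actual model uses a sample-adaptive, projector-dependent $\lambda_{i,l}$ across several layers.
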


\begin{proof}
By Assumption~\ref{ass:shared_geometry} and the triangle inequality, we have
\begin{align}
\|P_{\alpha}-P_{\beta}\|
&\le
\|P_{\alpha}-P^{\star}\|
+
\|P_{\beta}-P^{\star}\|
\nonumber \le
\delta_{\alpha}+\delta_{\beta}.
\end{align}
Moreover, by the definition of $T_{\lambda}^{(\alpha)}$, 
\begin{align}
T_{\lambda}^{(\alpha)}(\bm{f})
-
T_{\lambda}^{(\beta)}(\bm{f})
&=
(I-\lambda P_{\alpha})\bm{f}
-
(I-\lambda P_{\beta})\bm{f}
\nonumber\\
&=
\lambda(P_{\beta}-P_{\alpha})\bm{f}.
\end{align}
Taking norms on both sides gives
\begin{align}
\bigl\|
T_{\lambda}^{(\alpha)}(\bm{f})
-
T_{\lambda}^{(\beta)}(\bm{f})
\bigr\|
&\le
\lambda
\|P_{\alpha}-P_{\beta}\|
\|\bm{f}\|
\nonumber \le
\lambda
(\delta_{\alpha}+\delta_{\beta})
\|\bm{f}\|.
\end{align}
The prediction-score bound follows directly from the $L$-Lipschitz continuity of the downstream score function $h$.
\end{proof}

Proposition~\ref{prop:inference_gap} shows that the discrepancy across inference protocols is bounded by the variation in the corresponding estimated semantic projectors. From this perspective, Batch-SVD can be viewed as a training-time approximation to semantic subspace estimation, rather than as a protocol that imposes a fundamentally different representation objective. During training, it encourages the trainable branch to suppress semantically dominant directions inherited from the frozen CLIP anchor, thereby shaping a more forgery-oriented feature space. Batch Inference, Per-sample Inference, and Reference-based Inference differ mainly in how the semantic subspace is estimated, while remaining grounded in the same anchor-induced semantic geometry. At inference time, both Per-sample Inference and Reference-based Inference apply the same core operation as Batch Inference, namely attenuating semantic-dominant components under this shared geometry. Therefore, when the estimated projectors remain close, the resulting decoupled features stay close to those produced by Batch Inference.

\section{Experiments}
\subsection{Implementation Details} 

\begin{table*}[t]
\centering
\renewcommand{\arraystretch}{0.9}
\caption{Summary of datasets and evaluation protocols used in this work.}
\label{sec6-tab1}
\resizebox{\textwidth}{!}{
\begin{tabular}{l l c c}
\toprule
\textbf{Regime} & \textbf{Protocol} & \textbf{Training Set} & \textbf{Test Set(s)} \\
\midrule
\multirow{2}{*}{Face forgery detection}
& Cross-dataset generalization
& FF++ (c23)
& Celeb-DF-v1 (CD1), Celeb-DF-v2 (CD2), DFD, DFDC, DFDCP \\

& Cross-manipulation generalization
& FF++ (c23)
& DF40: UniFace, e4s, FaceDancer, FSGAN, InSwap, SimSwap\\
\midrule
\multirow{2}{*}{Synthetic image detection}
& Cross-generator generalization
& ProGAN
& UniversalFakeDetect: 19 unseen test sets (GANs/Diffusions/Deepfakes) \\

& Cross-generator generalization
& SD v1.4
& GenImage: Midjourney, SD v1.5, ADM, GLIDE, Wukong, VQDM, BigGAN \\
\bottomrule
\end{tabular}}
\end{table*}

We adopt the pre-trained CLIP ViT-L/14~\cite{clip} as the visual backbone and optimize the entire network end-to-end using AdamW~\cite{adamw}. The backbone learning rate is set to $1\times10^{-6}$, and the default batch sizes are 128 for training and 512 for test. All experiments are conducted on NVIDIA A100 GPUs. To better simulate real-world degradations, we apply standard data augmentations, including Gaussian blur and JPEG compression, following~\cite{lsda,deepfakebench}. For the proposed Geometric Semantic Decoupling (GSD) module, we insert it into the last four layers of the CLIP encoder by default. Unless otherwise specified, the learnable modulation parameters are initialized with $a=2$ and $b=1$, the maximum subtraction strength is set to $\lambda_{\max}=1$, and the number of singular vectors used for semantic subspace estimation is set to $k=64$. During inference, we adopt Batch-Inference by default for efficiency. The effects of the above design choices, including inference strategy, insertion depth, SVD configuration, and GSD module parameters  are analyzed in detail in Subsections~\ref{sec6-3} and~\ref{sec6-4}.

We evaluate our method under two complementary regimes, \emph{i.e.}, face forgery detection and synthetic image detection, as summarized in Table~\ref{sec6-tab1}. For face forgery detection, following DeepfakeBench~\cite{deepfakebench}, we use FaceForensics++ (FF++)~\cite{faceforensics++} with the high-quality compression setting (c23) for training, and assess both cross-dataset and cross-manipulation generalization. To evaluate performance against recent \textit{GAN}-based and diffusion-based generative models, we further benchmark our method on UniversalFakeDetect~\cite{univfd} and GenImage~\cite{genimage} under the standard cross-generator generalization protocols~\cite{univfd,effort,c2p}.

To provide a comprehensive comparison, we benchmark against a broad spectrum of competitive detectors from three categories: \emph{(i) foundation/backbone baselines}, \emph{(ii) face-forgery detectors}, and \emph{(iii) synthetic-image detectors}. The first category includes generic visual backbones and foundation models, such as Efficient-B4~\cite{efficientnet}, ResNet-50~\cite{resnet}, DeiT-S~\cite{deit-s}, Swin-T~\cite{swint}, and CLIP~\cite{clip}. The second category consists of representative face forgery detectors, including F3Net~\cite{f3net}, Face X-Ray~\cite{face-x-ray}, RECCE~\cite{recce}, SBIs~\cite{sbi}, LSDA~\cite{lsda}, FCG~\cite{fcg}, ProDet~\cite{prodet}, VbSaT~\cite{vbsat}, DeepShield~\cite{deepshield}, $\chi^2$-DFD~\cite{x2dfd}, CDFA~\cite{cdfa}, Effort~\cite{effort}, FreqDebias~\cite{freqdebias}, LaV~\cite{lav}, CFM~\cite{cfm}, IID~\cite{iid}, UCF~\cite{ucf}, FakeRadar~\cite{fakeradar}, ExposDe~\cite{exposde}, FSgD~\cite{fsgd}, RAID~\cite{raid}, UDD~\cite{udd}, DFFake~\cite{dffake}, and ICT~\cite{ict}. The third category covers detectors designed for AI-generated imagery and general image forensics, including CNN-Spot~\cite{cnnspot}, PatchForensics~\cite{patchfor}, Co-occurrence~\cite{co-occurence}, Freq-Spec~\cite{freq-spec}, GramNet~\cite{gramnet}, UniFD~\cite{univfd}, LGrad~\cite{lgrad}, NPR~\cite{npr}, FreqNet~\cite{frenet}, FatFormer~\cite{fatformer}, CLIPMoLE~\cite{clipmode}, DRCT~\cite{drct}, C2P~\cite{c2p}, and MPFT~\cite{mpft}.


\subsection{Main Results}
\begin{table*}[t]
\centering
\caption{Our method outperforms existing methods on five cross-domain datasets. ``--'' indicates that the results are not reported in the original paper, and † indicates that the results are obtained by using the official pre-trained model or reproduction, and some results are cited from~\cite{effort, x2dfd}. Top results are highlighted in \textbf{bold}.}
\label{sec6-tab2}
\setlength{\tabcolsep}{6pt}
\renewcommand{\arraystretch}{0.9}
\resizebox{\linewidth}{!}{%
\begin{tabular}{c|cccccc|c|cccccc}
\toprule
\multirow{2}{*}{\textbf{Methods}} 
& \multicolumn{6}{c|}{\textbf{Video-level AUC}} 
& \multirow{2}{*}{\textbf{Methods}} 
& \multicolumn{6}{c}{\textbf{Frame-level AUC}} \\
\cmidrule(lr){2-7} \cmidrule(lr){9-14}
& CDv2 & CDv1 & DFDC & DFDCP & DFD & \textbf{Avg.}
&
& CDv2 & CDv1 & DFDC & DFDCP & DFD & \textbf{Avg.} \\
\midrule \midrule
F3Net (ECCV'20) & 78.9 & -- & 71.8 & 74.9 & 84.4 & 77.5 & Efficient-b4† (ICML'19) & 74.9 & 70.5 & 69.6 & 72.8 & 81.5 & 73.9 \\
CLIP† (ICML'21) & 88.4 & 89.6 & 83.3 & 86.3 & 92.6 & 88.0 & CLIP† (ICML'21) & 83.2 & 86.1 & 80.7 & 82.4 & 87.7 & 84.0 \\
LaV (MM'23) & 92.2 & 84.7 & 82.0 & 89.7 & 96.8 & 89.1 & Face X-Ray (CVPR'20)  & 67.9  & 70.9  & 63.3 & 69.4 & 76.7 & 69.6 \\
IID (CVPR'23) & 83.8 & -- & 70.0 & 68.9 & 93.9 & 79.2 & RECCE (CVPR'22) & 73.2 & -- & 71.3 & 74.2 & 74.2 & 73.2 \\
CFM (TIFS'23) & 89.7 & -- & 70.6 & 80.2 & 95.2 & 83.9 & UCF† (ICCV'23) & 82.4 & 77.9 & 80.5 & 75.9 & 87.8 & 80.9 \\
SBIs (CVPR'22) & 90.6 & -- & 75.2 & 87.7 & 88.2 & 85.4 & ICT (CVPR'23) & 85.7 & 81.4 & -- & -- & 84.1 & 83.7 \\
ProDet† (NeurIPS'24) & 92.6 & 95.7 & 70.7 & 82.8 & 95.4 & 87.4 & ProDet† (NeurIPS'24) & 84.5 & 90.9 & 72.4 & 81.2 & -- & 82.3 \\
CDFA† (ECCV'24) & 93.8 & 92.1 & 83.0 & 88.1 & 95.4 & 90.5 & DFFake (NeurIPS'24) & 80.5 & -- & -- & 81.0 & 90.4 & 84.0 \\
FakeRadar (ICCV'25) & 91.7 & -- & 84.1 & 88.5 & 96.2 & 90.1 & ExposDe† (AAAI'24) & 86.4 & 81.8 & 72.1 & 85.1 & 89.8 & 83.0 \\
VbSaT (CVPR'25) & 94.7 & -- & 84.3 & 90.9 & 96.5 & 91.6 & DAID (NeurIPS'25) & 84.4 & -- & 66.9 & -- & 91.2 & 80.8 \\
DeepShield (ICCV'25) & 92.2 & -- & 82.8 & \textbf{93.2} & 96.1 & 91.1 & UDD (AAAI'25) & 86.9 & -- & 75.8 & 85.6 & 91.0 & 84.8 \\
FCG (CVPR'25) & 95.0 & -- & 81.8 & -- & -- & 88.4 & FSgD (NeurIPS'25) & 86.7 & 90.1 & -- & 81.8 & 82.1 & 85.2 \\
$\chi^2\text{-DFD}$ (NeurIPS'25) & 95.5 & -- & 85.3 & 91.2 & 95.7 & 91.9 & LSDA (CVPR'24) & 83.0 & 86.7 & 73.6 & 81.5 & 88.0 & 82.6 \\
Effort† (ICML'25) & 95.6 & 91.8 & 84.3 & 90.9 & 96.5 & 91.8 & FreqDebias (CVPR'25) & 83.6 & 87.5 & 74.1 & 82.4 & 86.8 & 82.9 \\
\midrule
\rowcolor{iceblue}
GSD-CLIP & \textbf{96.4} & \textbf{97.0} & \textbf{88.3} & \textbf{93.2} & \textbf{97.7} & \textbf{94.5} 
& GSD-CLIP & \textbf{90.6} & \textbf{91.3} & \textbf{85.7} & \textbf{90.1} & \textbf{94.5} & \textbf{90.4} \\
\bottomrule
\end{tabular}%
}
\end{table*}

\begin{table*}[t]
\centering
\caption{Our method achieves superior results on unseen manipulation methods. Results are reported on six representative face-swapping methods selected from DF40, where all samples are generated based on the FF++ domain.}
\label{sec6-tab3}
\setlength{\tabcolsep}{2pt}
\renewcommand{\arraystretch}{0.95}
\resizebox{\linewidth}{!}{%
\begin{tabular}{c|ccccccc|c|ccccccc}
\toprule
\multirow{2}{*}{\textbf{Methods}} 
& \multicolumn{7}{c|}{\textbf{Video-level AUC}} 
& \multirow{2}{*}{\textbf{Methods}} 
& \multicolumn{7}{c}{\textbf{Frame-level AUC}} \\
\cmidrule(lr){2-8} \cmidrule(lr){10-16}
& UniFace & e4s & FaceDancer & FSGAN & InSwap & SimSwap & \textbf{Avg.}
& 
& UniFace & e4s & FaceDancer & FSGAN & InSwap & SimSwap & \textbf{Avg.} \\ 
\midrule \midrule
CLIP (ICML'21) & 91.0 & 91.6 & 89.8 & 96.0 & 88.5 & 80.3 & 89.5 & CLIP (ICML'21) & 86.5 & 87.8 & 85.6 & 92.0 & 83.9 & 76.3 & 85.4 \\
RECCE (CVPR'22) & 89.8 & 68.3 & 84.8 & 94.9 & 84.8 & 76.8 & 83.2 & RECCE (CVPR'22) & 84.2 & 65.2 & 78.3 & 88.4 & 79.5 & 73.0 & 78.1 \\
SBI (CVPR'22) & 72.4 & 75.0 & 59.4 & 80.3 & 71.2 & 70.1 & 71.4 & IID (CVPR'23) & 79.5 & 71.0 & 79.0 & 86.4 & 74.4 & 64.0 & 75.7 \\
UCF (ICCV'23) & 83.1 & 73.1 & 86.2 & 93.7 & 80.9 & 64.7 & 80.3 & SBI (CVPR'22) & 64.4 & 69.0 & 44.7 & 87.9 & 63.3 & 56.8 & 64.4 \\
IID (CVPR'23) & 83.9 & 76.6 & 84.4 & 92.7 & 78.9 & 64.4 & 80.2 & UCF (ICCV'23) & 78.7 & 69.2 & 80.0 & 88.1 & 76.8 & 64.9 & 76.3 \\
LSDA (CVPR'24) & 87.2 & 69.4 & 72.1 & 93.9 & 85.5 & 69.3 & 79.6 & LSDA (CVPR'24) & 85.4 & 68.4 & 75.9 & 83.2 & 81.0 & 72.7 & 77.8 \\
ProDet (NeurIPS'24) & 90.8 & 77.1 & 74.7 & 92.8 & 83.7 & 84.4 & 83.9 & CDFA (ECCV'24) & 76.5 & 67.4 & 75.4 & 84.8 & 72.0 & 76.1 & 75.4 \\
CDFA (ECCV'24) & 76.2 & 63.1 & 80.3 & 94.2 & 77.2 & 75.7 & 77.8 & ProDet (NeurIPS'24) & 84.5 & 71.0 & 73.6 & 86.5 & 78.8 & 77.8 & 78.7 \\
VbSaT (CVPR'25) & 96.0 & 98.0 & 91.6 & 96.4 & 93.7 & 93.1 & 94.8 & FSgD (NeurIPS'25) & 91.8 & 87.5 & 83.0 & 86.3 & 87.4 & 91.0 & 87.8 \\
Effort (ICML'25) & 96.2 & 98.3 & 92.6 & 95.7 & 93.6 & 92.6 & 94.8 & $\chi^2\text{-DFD}$ (NeurIPS'25) & 85.2 & 91.2 & 83.8 & 89.9 & 78.4 & 84.9 & 85.6 \\ 
\midrule
\rowcolor{iceblue}
GSD-CLIP & \textbf{98.6} & \textbf{99.5} & \textbf{97.5} & \textbf{98.8} & \textbf{96.5} & \textbf{96.0} & \textbf{97.8} 
& GSD-CLIP & \textbf{96.3} & \textbf{98.0} & \textbf{93.8} & \textbf{96.1} & \textbf{92.1} & \textbf{91.1} & \textbf{94.6} \\
\bottomrule
\end{tabular}%
}
\end{table*}

\subsubsection{Face Forgery Detection} 
\textbf{Cross-Dataset Generalization Results.}
To evaluate robustness to unseen domains, we train all models exclusively on FF++ and directly evaluate them on five cross-domain benchmarks. As reported in Table~\ref{sec6-tab2}, the proposed GSD achieves the best overall performance at both the video and frame levels. In terms of video-level AUC, our method attains an average of \textbf{94.5\%}, surpassing the strongest competing method, $\chi^2$-DFD~\cite{x2dfd}, which achieves 91.9\%, by \textbf{2.6\%}. More specifically, GSD achieves the best performance on Celeb-DF-v2, Celeb-DF-v1, DFDC, and DFD, and the improvement on DFDC is particularly notable, where our method improves the best result from 85.3\% to \textbf{88.3\%}, indicating stronger robustness under complex real-world distribution shifts. A similar trend is observed at the frame level. GSD achieves an average frame-level AUC of \textbf{90.4\%}, outperforming the strongest competing method, FSgD~\cite{fsgd}, which obtains 85.2\%, by \textbf{5.2\%}. These results demonstrate that the proposed geometric semantic decoupling strategy improves cross-domain generalization by suppressing semantically dominant yet forensically uninformative components in the learned representation.

\noindent \textbf{Cross-Manipulation Generalization Results. }We further evaluate the generalization capability of GSD on unseen manipulation methods using DF40. As shown in Table~\ref{sec6-tab3}, our method achieves superior performance across all six representative face-swapping techniques. Specifically, GSD attains an average video-level AUC of \textbf{97.8\%}, exceeding the strongest prior methods, Effort~\cite{effort} and VbSaT~\cite{vbsat}, both of which achieve 94.8\%, by \textbf{3.0\%}. At the frame level, GSD further reaches an average AUC of \textbf{94.6\%}, surpassing the previous best result of FSgD~\cite{fsgd} at 87.8\% by a large margin of \textbf{6.8\%}. In addition to the overall gains, our method achieves the best result on every individual manipulation type, with particularly clear improvements on challenging methods such as FaceDancer and SimSwap. These results indicate that the proposed geometric semantic decoupling strategy enables the model to focus on more intrinsic and manipulation-invariant forensic cues, thereby substantially improving robustness to unseen forgery pipelines.

\subsubsection{Synthetic Image Detection}
\begin{table*}[t]
\centering
\renewcommand{\arraystretch}{0.95}
\caption{ACC (\%) performance on the UniversalFakeDetect dataset. Following~\cite{univfd}, 
ProGAN is used for training, and selected results are cited from~\cite{effort}. 
Our method achieves the best average performance and demonstrates competitive or superior results across most generative categories.}
\label{sec4-tabn1}
\resizebox{\textwidth}{!}{%
\begin{tabular}{c|cccccc|c|cc|cc|c|ccc|ccc|c|c}
\toprule
\multirow{2}{*}{Methods} & \multicolumn{6}{c|}{GAN} & \multirow{2}{*}{\shortstack{Deep\\fakes}} & \multicolumn{2}{c|}{Low level} & \multicolumn{2}{c|}{Perceptual loss} & \multirow{2}{*}{Guided} & \multicolumn{3}{c|}{LDM} & \multicolumn{3}{c|}{Glide} & \multirow{2}{*}{Dalle} & \multirow{2}{*}{\textbf{Avg.}} \\
\cmidrule{2-7} \cmidrule{9-12} \cmidrule{14-19}
 & \shortstack{Pro-\\GAN} & \shortstack{Cycle-\\GAN} & \shortstack{Big-\\GAN} & \shortstack{Style-\\GAN} & \shortstack{Gau-\\GAN} & \shortstack{Star-\\GAN} & & SITD & SAN & CRN & IMLE & & \shortstack{200\\steps} & \shortstack{200\\w/cfg} & \shortstack{100\\steps} & \shortstack{100\\27} & \shortstack{50\\27} & \shortstack{100\\10} & & \\
\midrule
CNN-Spot (CVPR'20) & \textbf{100.0} & 85.2 & 70.2 & 85.7 & 79.0 & 91.7 & 53.5 & 66.7 & 48.7 & 86.3 & 86.3 & 60.1 & 54.0 & 55.0 & 54.1 & 60.8 & 63.8 & 65.7 & 55.6 & 69.6 \\
Patchfor (ECCV'20) & 75.0 & 69.0 & 68.5 & 79.2 & 64.2 & 63.9 & 75.5 & 75.1 & 75.3 & 72.3 & 55.3 & 67.4 & 76.5 & 76.1 & 75.8 & 74.8 & 73.3 & 68.5 & 67.9 & 71.2 \\
Co-occurence (ArXiv'20) & 97.7 & 63.1 & 53.8 & 92.5 & 51.1 & 54.7 & 57.1 & 63.1 & 55.9 & 65.7 & 65.8 & 60.5 & 70.7 & 70.5 & 71.0 & 70.2 & 69.6 & 69.9 & 67.5 & 66.9 \\
Freq-spec (WIFS'20) & 49.9 & \textbf{99.9} & 50.5 & 49.9 & 50.3 & 99.7 & 50.1 & 50.0 & 48.0 & 50.6 & 50.1 & 50.9 & 50.4 & 50.4 & 50.3 & 51.7 & 51.4 & 50.4 & 50.0 & 55.5 \\
F3Net (ECCV'20) & 99.4 & 76.4 & 65.3 & 92.6 & 58.1 & \textbf{100.0} & 63.5 & 54.2 & 47.3 & 51.5 & 51.5 & 69.2 & 68.2 & 75.3 & 68.8 & 81.7 & 83.2 & 83.0 & 66.3 & 71.3 \\
CLIP$\dagger$ (ICML'21)  & \textbf{100.0} & 97.0 & 98.1 & 82.3 & \textbf{100.0} & 87.6 & 53.0 & 59.1 & 50.0 & 73.1 & 90.3 & 48.0 & 81.3 & 61.6 & 80.4 & 57.2 & 59.8 & 57.9 & 81.4 & 74.6 \\
UniFD (CVPR'23) & \textbf{100.0} & 98.5 & 94.5 & 82.0 & 99.5 & 97.0 & 66.6 & 63.0 & 57.5 & 59.5 & 72.0 & 70.0 & 94.2 & 73.8 & 94.4 & 79.1 & 79.8 & 78.1 & 86.8 & 81.4 \\
LGrad (CVPR'23) & 99.8 & 85.4 & 82.9 & 94.8 & 72.5 & 99.6 & 58.0 & 62.5 & 50.0 & 50.7 & 50.8 & 77.5 & 94.2 & 95.8 & 94.8 & 87.4 & 90.7 & 89.5 & 88.3 & 80.3 \\
FreqNet (AAAI'24) & 97.9 & 95.8 & 90.5 & 97.5 & 90.2 & 93.4 & 97.4 & 88.9 & 59.0 & 71.9 & 67.3 & 86.7 & 84.5 & \textbf{99.6} & 65.6 & 85.7 & 97.4 & 88.2 & 59.1 & 85.1 \\
NPR (CVPR'24) & 99.8 & 95.0 & 87.5 & 96.2 & 86.6 & 99.8 & 76.9 & 66.9 & \textbf{98.6} & 50.0 & 50.0 & 84.5 & 97.7 & 98.0 & 98.2 & 96.2 & 97.2 & \textbf{97.3} & 87.2 & 87.6 \\
FatFormer (CVPR'24) & 99.9 & 99.3 & 99.5 & 97.2 & 99.4 & 99.8 & 93.2 & 81.1 & 68.0 & 69.5 & 69.5 & 76.0 & 98.6 & 94.9 & 98.7 & 94.3 & 94.7 & 94.2 & 98.8 & 90.9 \\
C2P-CLIP (AAAI'25) & \textbf{100.0} & 97.3 & 99.1 & 96.4 & 99.2 & 99.6 & 93.8 & \textbf{95.6} & 64.4 & 93.3 & 93.3 & 69.1 & 99.3 & 97.3 & 99.3 & 95.3 & 95.3 & 96.1 & 98.6 & 93.8 \\
MPFT (Arxiv'26) & 99.9 & 99.2 & 97.3 & 94.9 & 97.6 & 99.7 & \textbf{97.6} & 94.7 & 90.9 & 87.7 & 84.8 & 82.0 & 99.3 & 97.2 & 99.5 & 95.5 & 96.3 & 96.8 & 98.4 & 94.6 \\
\midrule
\rowcolor{iceblue}
GSD-CLIP & \textbf{100.0} & 99.4 & \textbf{99.7} & \textbf{98.3} & 99.8 & \textbf{100.0} & 91.4 & 90.5 & 91.1 & \textbf{98.3} & \textbf{98.3} & \textbf{91.9} & \textbf{99.6} & 98.2 & \textbf{99.7} & \textbf{97.8} & \textbf{98.4} & 97.2 & \textbf{99.5} & \textbf{97.3} \\
\bottomrule
\end{tabular}%
}
\end{table*}

\begin{table*}[h]
\caption{ACC (\%) performance on the GenImage dataset. Our method achieves the best average ACC while maintaining balanced performance across diverse generative categories.}
\centering
\tiny
\label{sec6-tab8}
\setlength{\tabcolsep}{6pt}
\renewcommand{\arraystretch}{0.95}
\resizebox{\textwidth}{!}{%
\begin{tabular}{c|cccccccc|c}
\hline
Methods & Midjourney & SDv1.4 & SDv1.5 & ADM & GLIDE & Wukong & VQDM & BigGAN & \textbf{Avg.} \\
\hline
ResNet-50 (CVPR'16) & 54.9 & 99.9 & 99.7 & 53.5 & 61.9 & 98.2 & 56.6 & 52.0 & 72.1 \\
DeiT-S (ICML'21)    & 55.6 & 99.9 & 99.8 & 49.8 & 58.1 & 98.9 & 56.9 & 53.5 & 71.6 \\
Swin-T (ICCV'21)    & 62.1 & 99.9 & 99.8 & 49.8 & 67.6 & 99.1 & 62.3 & 57.6 & 74.8 \\
CNNSpot (CVPR'20)   & 52.8 & 96.3 & 95.9 & 50.1 & 39.8 & 78.6 & 53.4 & 46.8 & 64.2 \\
Freq-spec (WIFS'20) & 52.0 & 99.4 & 99.2 & 49.7 & 49.8 & 94.8 & 55.6 & 49.8 & 68.8 \\
F3Net (ECCV'20)     & 50.1 & 99.9 & \textbf{99.9} & 49.9 & 50.0 & \textbf{99.9} & 49.9 & 49.9 & 68.7 \\
GramNet (CVPR'20)   & 54.2 & 99.2 & 99.1 & 50.3 & 54.6 & 98.9 & 50.8 & 51.7 & 69.9 \\
UnivFD (CVPR'23)    & \textbf{91.5} & 96.4 & 96.1 & 58.1 & 73.4 & 94.5 & 67.8 & 57.7 & 79.4 \\
NPR (CVPR'24)       & 81.0 & 98.2 & 97.9 & 76.9 & 89.8 & 96.9 & 84.1 & 84.2 & 88.6 \\
FreqNet (AAAI'24)   & 89.6 & 98.8 & 98.6 & 66.8 & 86.5 & 97.3 & 75.8 & 81.4 & 86.9 \\
DRCT (ICML'24)      & \textbf{91.5} & 95.0 & 94.4 & 79.4 & 89.2 & 94.7 & 90.0 & 81.7 & 89.5 \\
Effort (ICML'25)    & 82.4 & 99.8 & 99.8 & 78.7 & \textbf{93.3} & 97.4 & 91.7 & 77.6 & 90.1 \\
\hline
\rowcolor{iceblue}
GSD-CLIP & 84.7 & \textbf{100.0} & \textbf{99.9} & \textbf{87.2} & 92.1 & 99.8 & \textbf{95.3} & \textbf{91.0} & \textbf{93.8} \\
\hline
\end{tabular}%
}
\end{table*}

\textbf{UniversalFakeDetect Generalization Results.} Following the protocol of~\cite{univfd,cnnspot,effort}, we train the detector only on ProGAN-generated images and their real counterparts, and evaluate its cross-generator generalization on the UniversalFakeDetect benchmark. As shown in Table~\ref{sec4-tabn1}, our method achieves the best average accuracy of \textbf{97.3\%}, outperforming the strongest prior method, MPFT~\cite{mpft}, by \textbf{2.7} percentage points (97.3\% vs.\ 94.6\%), and surpassing C2P-CLIP~\cite{c2p} by \textbf{3.5} percentage points (97.3\% vs.\ 93.8\%). A closer inspection shows that the improvement does not come from a single favorable subset, but from broadly strong performance across diverse generator families. In the GAN-based categories, our method remains near-saturated on most subsets, while also showing clear advantages on more challenging unseen regimes such as perceptual-loss-based generation, guided diffusion, LDM, Glide, and DALL-E. We also observe that our method is not uniformly superior on every individual subset. For example, MPFT obtains higher accuracy on Deepfakes and SITD, and some earlier methods achieve very strong results on specific categories such as CycleGAN. Nevertheless, our method achieves the highest overall average and delivers more balanced performance across all generators. This suggests that explicitly suppressing semantically dominant directions inherited from large-scale pre-training helps reduce the detector's reliance on category-level semantics, thereby encouraging the representation to capture more intrinsic forensic irregularities that are transferable across heterogeneous generative models.

\noindent \textbf{GenImage Generalization Results.} For the GenImage dataset, we follow the standard setting by using SDv1.4 for training and evaluating generalization across different generative models. As shown in Table~\ref{sec6-tab8}, our method achieves the best average accuracy of \textbf{93.8\%}, outperforming the strongest prior method, Effort~\cite{effort}, by \textbf{3.7} percentage points (93.8\% vs.\ 90.1\%). This improvement indicates that GSD-CLIP provides stronger overall cross-generator generalization rather than only improving performance on the source domain. A more detailed comparison shows that our method achieves the best results on several challenging target generators, including ADM, VQDM, and BigGAN, and also reaches near-saturated performance on SDv1.4, SDv1.5, and Wukong. These results suggest that the proposed GSD helps the detector learn more transferable forensic representations from a single source generator, reducing its reliance on generator-specific semantic patterns and improving robustness under cross-generator distribution shifts. 


\subsection{Comparison of Training and Inference Protocol}
\label{sec6-3}
In the proposed GSD framework, the choice of inference protocol must account for realistic deployment constraints as well as the trade-off between training consistency, inference efficiency, and semantic estimation fidelity. While batch-level semantic decoupling is naturally aligned with the training formulation and large-scale benchmark evaluation, real-world applications often require per-sample processing, where batch statistics may be unavailable. Meanwhile, reference-based strategies offer a practical alternative for reducing the cost of per-image semantic decomposition when auxiliary samples can be prepared beforehand. Therefore, it is necessary to discuss different training and inference protocols explicitly, not only to ensure fair evaluation, but also to clarify how the proposed method can be adapted to different practical scenarios. In the following, we consider two training strategies, namely \emph{Single-SVD} training and \emph{Batch-SVD} training, together with three inference protocols for GSD: Batch Inference (\emph{B-Infer}), Per-sample Inference (\emph{P-Infer}), and Reference-based Inference (\emph{R-Infer}).

\begin{table}[t]
\centering
\caption{Comparison of different training and inference protocols for face forgery detection. We report ACC and frame-level AUC (\%) on 11 test datasets.}
\setlength{\tabcolsep}{2pt}
\label{sec6-tab9}
\renewcommand{\arraystretch}{0.9}
\small
\resizebox{\columnwidth}{!}{%
\begin{tabular}{l c c c c c c c}
\toprule
\textbf{Infer} & \textbf{Ref. Dataset} & \textbf{Metric} & \textbf{CD2} & \textbf{DFDC} & \textbf{e4s} & \textbf{FaceDancer} & \textbf{Avg.all} \\
\midrule

\rowcolor{SoftCyan}
\multicolumn{8}{c}{\textbf{Single-SVD Training for Face Forgery}} \\
\midrule
\textit{P-Infer} & -- & ACC & \underline{83.0} & \textbf{77.2} & \textbf{93.7} & \textbf{87.3} & \textbf{86.0} \\
\textit{P-Infer} & -- & AUC & \underline{90.4} & \textbf{86.2} & \textbf{98.1} & \textbf{94.1} & \textbf{92.8} \\
\midrule

\rowcolor{SoftPeach}
\multicolumn{8}{c}{\textbf{Batch-SVD Training for Face Forgery}} \\
\midrule
\textit{P-Infer} & -- & ACC & 82.7 & 76.4 & 93.1 & \underline{87.0} & 85.6 \\
\textit{P-Infer} & -- & AUC & 90.4 & 85.5 & 97.9 & \underline{93.9} & 92.4 \\
\addlinespace[1pt]

\textit{R-Infer} & FF++ & ACC & 82.6 & 76.9 & 93.3 & 86.7 & 85.7 \\
\textit{R-Infer} & FF++ & AUC & 90.3 & 85.5 & 97.9 & 93.7 & 92.3 \\
\addlinespace[1pt]

\textit{R-Infer} & CDv2  & ACC & 82.6 & 76.4  & 93.2  & 86.7  & 85.7   \\
\textit{R-Infer} & CDv2 & AUC & 90.4 & 85.5  & 97.9  & 93.7  & 92.3   \\
\addlinespace[1pt]

\textit{B-Infer} & -- & ACC & \textbf{83.1} & \underline{77.0} & \underline{93.5} & 86.9 & \underline{85.9} \\
\textit{B-Infer} & -- & AUC & \textbf{90.6} & \underline{85.8} & \underline{98.0} & 93.8 & \underline{92.7}\\
\bottomrule
\end{tabular}%
}
\end{table}

\begin{table}[t]
\centering
\caption{Comparison of different inference protocols for the natural-image setting. ACC (\%) is reported on category-specific images generated by CycleGAN and ProGAN.}
\label{sec6-tab10}
\setlength{\tabcolsep}{3.5pt}
\renewcommand{\arraystretch}{0.9}
\resizebox{0.98\linewidth}{!}{
\begin{tabular}{c c cccc c}
\toprule
\multirow{2}{*}{\textbf{Infer}} & \multirow{2}{*}{\textbf{Metric}} & \multicolumn{5}{c}{\cellcolor{SoftPeach}\textbf{CycleGAN}}  \\
\cmidrule(lr){3-7}
& & \textbf{Apple} & \textbf{horse} & \textbf{Orange} & \textbf{Summer} & \textbf{Avg.all} \\
\midrule
\textit{R-Infer}  & ACC & 99.6 & \textbf{99.8} & 99.4 & 99.3 & 99.4 \\
\textit{B-Infer}  & ACC & 99.6 & 99.7 & 99.4 & 99.3 & 99.4 \\
\midrule
\multirow{2}{*}{\textbf{Infer}} & \multirow{2}{*}{\textbf{Metric}} & \multicolumn{5}{c}{\cellcolor{SoftCyan}\textbf{ProGAN}} \\
\cmidrule(lr){3-7}
& & \textbf{Airplane} & \textbf{Bicycle} & \textbf{Car} & \textbf{Dog} & \textbf{Avg.all} \\
\midrule
\textit{R-Infer} & ACC & 100.0 & 100.0 & 100.0 & 100.0 & 100.0 \\
\textit{B-Infer} & ACC & 100.0 & 100.0 & 100.0 & 100.0 & 100.0 \\
\bottomrule
\end{tabular}}
\end{table}

\subsubsection{Different Strategies for Face Image} 
We first investigate the impact of different training and inference protocols in the face forgery setting. Specifically, the detector is trained on FF++ and evaluated over 11 cross-domain test sets. For Reference-based Inference, we consider two strategies for constructing the reference set. Firstly, we randomly sample 512 images from FF++, corresponding to a setting in which the reference set is semantically aligned with the training distribution. Secondly, we instead sample 512 face images from CDv2, whose distribution differs from that of the training set, in order to better simulate realistic deployment scenarios where the test data are not distributionally matched to the training data. The results are reported in Table~\ref{sec6-tab9}.

Several observations can be drawn from Table~\ref{sec6-tab9}. First, for models trained with Batch-SVD, both Per-sample Inference and Reference-based Inference achieve performance very close to that of Batch Inference. Notably, the effectiveness of Reference-based Inference does not rely on the reference set being drawn from the training data itself; it remains competitive even when the reference images are sampled from an external dataset. Second, compared with Batch-SVD training, Single-SVD training yields only marginal performance gains. This suggests that GSD does not require computationally intensive per-sample semantic subspace estimation during optimization. Instead, estimating the semantic subspace at the mini-batch level is already sufficient to capture the dominant anchor-induced semantic structure and to learn a robust forgery-oriented representation. These findings are consistent with the hypothesis discussed in Section~\ref{sec:inference}, and at test time, even if the semantic subspace is estimated in a different manner, \emph{e.g.}, through per-image decomposition or a pre-defined reference set, the resulting decoupled features remain well aligned with the forgery-oriented manifold during training. From a practical perspective, these results further underscore the deployability of the proposed framework: it can be trained efficiently with Batch-SVD, while supporting multiple inference protocols that remain effective under realistic application constraints, including scenarios where batch statistics are unavailable or external reference sets must be used.

\subsubsection{Different Inference Protocols for Natural Image}
We further investigate the effect of different inference strategies in the natural-image setting. Specifically, we select the CycleGAN and ProGAN subsets from UniversalFakeDetect, as they provide explicit semantic categories for each generated image, making it possible to construct category-aligned reference sets for Reference-based Inference. We then sample semantically matched natural images from ImageNet-12K and Pascal Visual Object Classes (VOC) 2012 as the reference sets, thereby simulating a realistic deployment scenario in which auxiliary reference images are available but are not drawn from the training distribution itself. The corresponding results are reported in Table~\ref{sec6-tab10}. The performance gap between Per-sample Inference and Batch Inference is negligible on both CycleGAN and ProGAN, indicating that the proposed GSD remains highly stable even when batch-level statistics are unavailable at test time. This observation is consistent with our analysis in the face-image setting, suggesting that the effectiveness of GSD does not arise from strictly reproducing the exact semantic-subspace estimation protocol during inference. Rather, once the forgery-oriented representation manifold has been established during training, different inference strategies can still produce highly consistent decoupled features under the same anchor-induced semantic geometry.


\subsection{Ablation Study}
\label{sec6-4}
\subsubsection{The Impact of Different $\lambda$}
We investigate the effect of the subtraction coefficient $\lambda$ on detection performance. To this end, we replace the adaptive subtraction coefficient with a set of fixed constants and evaluate the resulting models under the same protocol. We train our method on FF++, and the results are reported in Table~\ref{sec6-tab11}. Two observations can be made. First, increasing $\lambda$ generally leads to improved performance, indicating that stronger suppression of semantically dominant components is beneficial for learning more transferable forensic representations. In particular, ACC improves steadily as $\lambda$ increases from 0.05 to 0.6 on most evaluated datasets. However, once $\lambda$ becomes sufficiently large, the performance gain begins to saturate, and the difference between $\lambda=0.6$ and $\lambda=0.8$ is marginal. Second, the adaptive formulation consistently achieves better overall performance than any fixed setting. Since different inputs exhibit different levels of semantic dominance, a sample-adaptive coefficient enables the model to modulate the subtraction strength more appropriately, thereby better balancing the preservation of discriminative forensic cues against the suppression of semantically interfering information. For example, on the InSwap dataset, the adaptive strategy improves the ACC from 84.6\% under $\lambda=0.8$ to 85.1\%, yielding a gain of 0.5 percentage points. A similar trend can be observed in the overall average, where the adaptive setting achieves the best ACC among all compared strategies.

\begin{figure}[t]
    \centering
    \includegraphics[width=\linewidth]{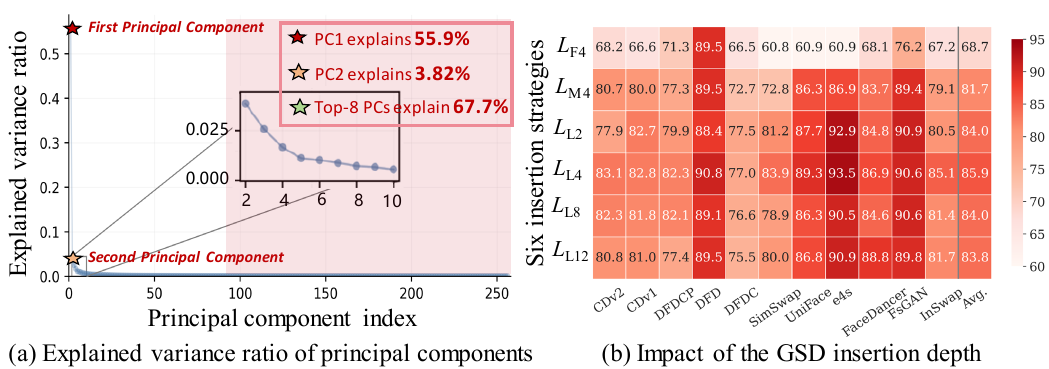}
    \caption{(a) The semantic variance is highly concentrated in the first few principal components, particularly the first and second ones, while the remaining components contribute only marginally. (b) We compare six insertion strategies and report ACC (\%) on the testing datasets.}
    \label{sec6-fig5}
\end{figure}

\begin{table}[t]
\centering
\caption{Comparison under different subtraction strategies. We report ACC (\%) on the testing datasets.}
\label{sec6-tab11}
\setlength{\tabcolsep}{3pt}
\renewcommand{\arraystretch}{0.9}
\resizebox{0.98\linewidth}{!}{
\begin{tabular}{c|cccccc}
\toprule
$\lambda$ & \textbf{CDv2} & \textbf{DFDCP} & \textbf{DFDC} & \textbf{e4s} & \textbf{InSwap} & \textbf{Avg.} \\
\midrule
$0.05$ & 81.2 & 80.9 & 76.3 & 92.6 & 84.0 & 83.0 \\
$0.1$  & 81.4 & 80.8 & 76.6 & 92.7 & 84.5 & 83.2 \\
$0.2$  & 81.7 & 81.3 & \textbf{77.0} & \textbf{93.5} & 85.0 & 83.7 \\
$0.4$  & 82.3 & 82.0 & 76.7 & 93.2 & 84.5 & 83.7 \\
$0.6$  & 82.8 & 82.2 & 76.9 & 93.1 & 84.7 & 83.9 \\
$0.8$  & 82.8 & \textbf{82.3} & 76.8 & 93.2 & 84.6 & 83.9 \\
\midrule
\rowcolor{iceblue}
Adaptive & \textbf{83.1} & \textbf{82.3} & \textbf{77.0} & \textbf{93.5} & \textbf{85.1} & \textbf{84.2} \\
\bottomrule
\end{tabular}}
\end{table}

\begin{table}[t]
\centering
\caption{Effect of the subspace dimension $k$ on the testing datasets. Frame-level AUC (\%) is reported.}
\label{sec6-tab12}
\setlength{\tabcolsep}{5pt}
\renewcommand{\arraystretch}{0.9}
\resizebox{0.98\linewidth}{!}{
\begin{tabular}{c|cccccc}
\toprule
\textbf{$k$} & \textbf{CDv2} & \textbf{DFD} & \textbf{DFDC} & \textbf{SimSwap} & \textbf{UniFace} & \textbf{Avg.} \\
\midrule
1  & 89.8 & 94.0 & \textbf{85.9} & \textbf{91.4} & 95.9 & 91.4 \\
2  & 90.2 & 94.0 & 85.5 & 91.3 & 95.9 & 91.4 \\
8 & 90.5 & 94.4 & \textbf{85.9} & 91.2 & 96.3 & \textbf{91.7} \\
16 & 90.4 & 94.4 & 85.7 & 91.0 & \textbf{96.4} & 91.6 \\
32 & 90.5 & 94.4 & 85.6 & 91.0 & \textbf{96.4} & 91.6 \\
\rowcolor{iceblue}
64  & \textbf{90.6} & \textbf{94.5} & 85.8 & 91.1 & 96.3 & \textbf{91.7} \\
\bottomrule
\end{tabular}}
\end{table}


\subsubsection{The Impact of the Estimated Subspace Dimension $k$}
We further investigate the effect of the estimated semantic subspace dimension $k$ on detection performance, as reported in Table~\ref{sec6-tab12}. Overall, even a very small subspace dimension already yields strong performance: setting $k=1$ produces competitive results across both cross-domain and cross-manipulation benchmarks, indicating that suppressing only the most dominant semantic direction is already beneficial. As $k$ increases to 8, the performance improves further, suggesting that a moderately richer semantic subspace allows GSD to more effectively remove semantically dominant components and thus enhance generalization. Increasing $k$ beyond 8 brings only marginal additional gains. Based on this trade-off, we adopt $k=64$ as the default setting in all experiments.

We further analyze why such small values of $k$, \emph{e.g.}, $k=1$ or $k=2$, are already highly effective. The reason is that the semantic feature space induced by CLIP is itself highly low-rank. As shown in Figure~\ref{sec6-fig5}(a), we randomly sample 5,000 images from FF++ and analyze the explained variance ratio of the principal components extracted from CLIP features. It can be seen that the semantic variance is concentrated overwhelmingly in the first few principal components, especially the first two, while the remaining components contribute only marginally. This observation is highly consistent with the trend in Table~\ref{sec6-tab12}: once the dominant low-dimensional semantic structure is removed, the benefit of further increasing $k$ quickly diminishes.

\subsubsection{The Impact of the GSD Module Placement}
The GSD insertion depth is critical for balancing semantic suppression and artifact preservation. Specifically, we evaluate six insertion depths: the first ($L_{\mathrm{F4}}$) and middle ($L_{\mathrm{M4}}$) four layers, as well as the last 2 ($L_{\mathrm{L2}}$), 4 ($L_{\mathrm{L4}}$), 8 ($L_{\mathrm{L8}}$), and 12 ($L_{\mathrm{L12}}$) layers. The results in Figure~\ref{sec6-fig5}(b) show that GSD should not be inserted too early. Applying it to the first four layers leads to a substantial performance drop, while inserting it into the middle four layers, or the last 8/12 layers all remains clearly inferior to inserting it into the last four layers. This is because shallow layers retain more low-level texture and artifact cues, whereas the deeper layers encode richer and more coherent semantic structure; such a phenomenon is consistent with prior findings that preserving shallow forensic details is important for manipulation detection~\cite{lav, laa}. It is also in line with Table~\ref{sec3-tab1}, which shows that the deeper layers of the fine-tuned CLIP encoder exhibit highly consistent semantic structure, and this semantic consistency is precisely one of the major causes of semantic fallback and degraded cross-domain generalization. At the same time, the insertion depth should not be excessively late. For example, inserting GSD only into the last two layers performs noticeably worse than inserting it into the last four layers, suggesting that semantic decoupling benefits from a moderate depth. Taken together, these results indicate that inserting GSD into the last four layers provides the best trade-off, and we therefore adopt this configuration as the default setting.

\subsubsection{Analysis of initial settings for parameters $a$ and $b$}
\begin{figure}[t]
    \centering
    \includegraphics[width=0.98\linewidth]{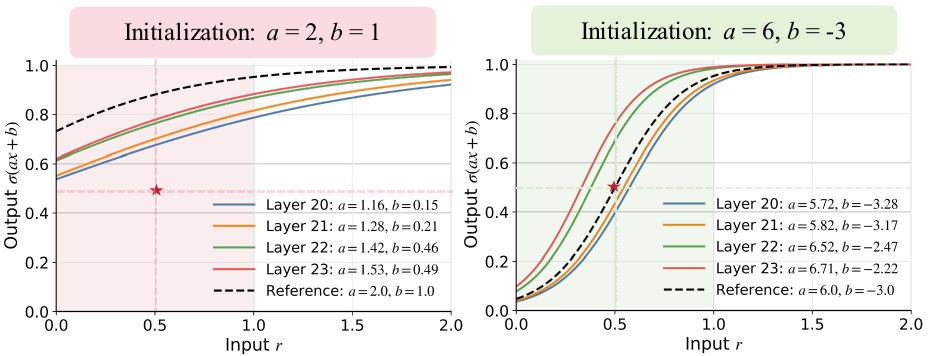}
    \vspace{3pt}
    \setlength{\tabcolsep}{4pt}
    \renewcommand{\arraystretch}{1.05}
    \resizebox{\linewidth}{!}{
    \begin{tabular}{c|ccccc|c}
    \toprule
    \textbf{$(a,b)$} & \textbf{CDv2} & \textbf{DFDC} & \textbf{DFDCP} & \textbf{SimSwap} & \textbf{FSGAN} & \textbf{Avg.} \\
    \midrule
    $(1, 0)$  & 81.9 & 76.9 & 81.1 & \textbf{84.2} & \textbf{90.9} & 83.0 \\
    $(1, 1)$  & 82.8 & 76.6 & 81.9 & \textbf{84.0} & \textbf{90.8} & 83.2 \\
    \rowcolor{iceblue}
    $(2, 1)$ & \textbf{83.1} & \textbf{77.0} & \textbf{82.1} & 83.9 & 90.8 & \textbf{83.4} \\
    $(2, -1)$ & \textbf{83.1} & 76.8 & \textbf{82.1} & 83.9 & 90.6 & 83.3 \\
    $(6, -3)$ & 82.3 & \textbf{77.0} & 81.4 & 83.8 & 90.7 & 83.0 \\
    \bottomrule
    \end{tabular}
    }
    \caption{Visualization of different initialization settings for $\lambda$ and the corresponding results (ACC, \%) on the test datasets.}
    \label{sec6-fig7}
\end{figure}
We further analyze the effect of different initialization settings of the adaptive modulation parameters $a$ and $b$, which directly control the magnitude of subtraction through the mapping to $\lambda$. Specifically, we evaluate five representative initialization configurations, and the corresponding results are summarized in Figure~\ref{sec6-fig7}. Several key observations can be drawn. 1) Among all tested configurations, $(2,1)$ achieves the best overall performance across all benchmarks. Although different initialization settings lead to certain performance variations, the overall differences remain relatively limited, indicating that the proposed adaptive mechanism is robust to the choice of initialization. 2) A clear compensatory effect is observed in the learned modulation. When the initial subtraction strength is relatively strong (\textit{e.g.,} $(1,1)$ and $(2,1)$), the model tends to reduce the effective subtraction during training; conversely, when the initial strength is weaker (\textit{e.g.,} $(6,-3)$), the model adaptively increases it. This behavior is consistent with our analysis in Section~\ref{sec3-3}, suggesting the existence of an implicit equilibrium point for $\lambda$, where semantic redundancy is sufficiently suppressed while shared discriminative evidence is preserved. 3) The learned parameters exhibit distinct layer-wise variations, indicating that each layer learns a dedicated modulation pattern according to its position in the representation hierarchy and the corresponding degree of semantic dominance. This observation further justifies the use of layer-specific adaptive coefficients rather than a globally fixed subtraction rule. 4) A consistent trend is observed across all settings: the learned $\lambda$ values progressively increase in deeper layers. This suggests that deeper layers contain more semantically dominant components and therefore require stronger suppression. This trend is also highly consistent with our earlier analysis that deeper layers of CLIP are increasingly dominated by high-level semantic structures, where the semantic fallback phenomenon becomes most pronounced. This further suggests that the proposed adaptive modulation mechanism is not merely a parameterization choice, but a principled way to align subtraction strength with the semantic structure of deep representations.

\subsubsection{The Impact of Different Batch Sizes}
\begin{table}[t]
\centering
\caption{Effect of training and inference batch sizes on frame-level AUC (\%) over the testing datasets.}
\label{sec6-tab13}
\setlength{\tabcolsep}{4pt}
\begin{tabular*}{\columnwidth}{@{\extracolsep{\fill}}lcccccc@{}}
\toprule
\textbf{Batchsize} & \textbf{        CDv1        } & \textbf{        DFD        } & \textbf{      DFDC} & \textbf{FaceDancer} & \textbf{FSGAN      } & \textbf{    Avg.     } \\
\midrule
\rowcolor{SoftGreen}
\multicolumn{7}{c}{\textbf{Different Training Batch Sizes (Test batch size $\times 4$)}} \\
\midrule
$\mathit{B_{tr}}=16$  & 90.6 & 94.4 & 85.0 & 93.6 & 96.3 & 92.0 \\
$\mathit{B_{tr}}=32$  & 90.5 & 94.2 & 85.1 & 93.7 & 96.4 & 92.0 \\
$\mathit{B_{tr}}=64$  & 90.9 & 94.1 & 86.1 & 93.6 & 96.4 & 92.2 \\
\rowcolor{SoftCyan}
$\mathit{B_{tr}}=128$ & 91.3 & 94.5 & 85.8 & 93.8 & 96.1 & 92.3 \\
\midrule
\rowcolor{SoftPeach}
\multicolumn{7}{c}{\textbf{Different Inference Batch Sizes (Training Batch Size = 128)}} \\
\midrule
$\mathit{B_{ts}}=32$  & 91.1 & 94.4 & 85.8 & 93.8 & 96.1 & 92.2 \\
$\mathit{B_{ts}}=64$  & 91.2 & 94.5 & 85.8 & 93.9 & 96.1 & 92.3 \\
$\mathit{B_{ts}}=128$ & 91.0 & 94.5 & 85.5 & 93.9 & 96.1 & 92.2 \\
$\mathit{B_{ts}}=256$ & 91.3 & 94.5 & 85.7 & 93.8 & 96.1 & 92.3 \\
\bottomrule
\end{tabular*}
\end{table}

Considering practical deployment constraints, especially limited GPU memory in real-world scenarios, we evaluate both different training batch sizes and different inference batch sizes. The corresponding results are reported in Table~\ref{sec6-tab13}. 
As the training batch size decreases from 128 to 16, the average frame-level AUC drops slightly, from 92.3\% to 92.0\%. This suggests that larger mini-batches provide a somewhat more stable estimate of the semantic subspace during optimization, but the dependence is not strong. More importantly, even under smaller training batch sizes, the proposed method still remains highly competitive, and in comparison with the results in Tables~\ref{sec6-tab2} and~\ref{sec6-tab3}, it continues to preserve state-of-the-art performance. The influence of the inference batch size is even smaller, and under the default training batch size, varying the inference batch size from 32 to 256 results in almost no change in performance. Overall, these results demonstrate that the proposed framework is robust to batch-size variation. Such stability is practically valuable, as it ensures that the method can be deployed flexibly and effectively even in scenarios with limited computational resources or memory budgets.

\subsubsection{Compatibility with Various Vision Foundation Models}
We further evaluate the compatibility of GSD with different vision foundation models, including BLIP, SigLIP, and CLIP. As shown in Table~\ref{sec6-tab14}, GSD consistently improves performance across all backbones. For example, on BLIP, GSD improves the average AUC from 82.0 to 85.9; on SigLIP, from 84.3 to 88.9; and on CLIP, from 88.8 to 94.4. Notably, the improvements are consistently observed across all evaluation datasets, indicating that the effectiveness of GSD is not tied to a specific architecture or pre-training paradigm. Overall, these results highlight that GSD establishes a general semantic-decoupling training paradigm, which can be seamlessly integrated into a wide range of VFMs to improve their  cross-domain generalization.

\begin{table}[t]
\centering
\caption{Ablation studies (video-level AUC\%) on different vision foundation models (VFMs).}
\label{sec6-tab14}
\setlength{\tabcolsep}{3pt}
\renewcommand{\arraystretch}{0.9}
\resizebox{\linewidth}{!}{
\begin{tabular}{c|cccccc}
\toprule
\textbf{Method} & \textbf{CDv2} & \textbf{DFDCP} & \textbf{DFDC} & \textbf{InSwap} & \textbf{FaceDancer} & \textbf{Avg.} \\
\midrule
BLIP & 83.3 & 81.5 & 76.9 & 87.5 & 81.0 & 82.0 \\
+Ours  & 87.4 \textsuperscript{\textcolor{growthgreen}{+4.1}} 
& 84.9 \textsuperscript{\textcolor{growthgreen}{+3.4}} 
& 80.3 \textsuperscript{\textcolor{growthgreen}{+3.4}} 
& 91.2 \textsuperscript{\textcolor{growthgreen}{+3.7}} 
& 85.7 \textsuperscript{\textcolor{growthgreen}{+4.7}} 
& 85.9 \textsuperscript{\textcolor{growthgreen}{+3.9}} \\
\midrule
SigLIP  & 88.0 & 86.3 & 78.7 & 85.8 & 82.5 & 84.3 \\
+Ours  & 91.3 \textsuperscript{\textcolor{growthgreen}{+3.3}} 
& 89.4 \textsuperscript{\textcolor{growthgreen}{+3.1}} 
& 81.0 \textsuperscript{\textcolor{growthgreen}{+2.3}} 
& 93.1 \textsuperscript{\textcolor{growthgreen}{+7.3}} 
& 89.9 \textsuperscript{\textcolor{growthgreen}{+7.4}} 
& 88.9 \textsuperscript{\textcolor{growthgreen}{+4.7}} \\
\midrule
CLIP  & 90.7 & 88.4 & 83.1 & 90.3 & 91.5 & 88.8 \\
+Ours  & 96.4 \textsuperscript{\textcolor{growthgreen}{+5.7}} 
& 93.2 \textsuperscript{\textcolor{growthgreen}{+4.8}} 
& 88.6 \textsuperscript{\textcolor{growthgreen}{+5.5}} 
& 96.5 \textsuperscript{\textcolor{growthgreen}{+6.2}} 
& 97.5 \textsuperscript{\textcolor{growthgreen}{+6.0}} 
& 94.4 \textsuperscript{\textcolor{growthgreen}{+5.6}} \\
\bottomrule
\end{tabular}}
\end{table}

\section{Conclusion and Limitations}
In this paper, we identify a critical challenge in detecting AI-generated content: \emph{semantic fallback}, a failure mechanism in VFM-based deepfake detectors where, even after fine-tuning, the models still preserve the high-level semantic feature manifold learned during pre-training, which severely suppresses the representation of subtle forgery-specific cues that are crucial for generalizing to unseen generation datasets. To address this, we propose a simple module, Geometric Semantic Decoupling (GSD), that suppresses semantic components from learned features via a geometric projection, forcing detectors to focus on subtle artifact cues. This approach not only achieves state-of-the-art generalization across diverse deepfake datasets, but also successfully extends to general-scene synthetic images, highlighting its broad applicability. By reducing semantic bias, GSD advances the reliability of AI-generated content detection, helping society more effectively identify deepfakes and other synthetic media, and mitigating the growing risks posed by generative AI.

Despite these promising results, GSD still has several limitations. Although our method explicitly decouples semantic content from forensic representations and adaptively controls the suppression strength according to the semantic occupancy of each sample, it does not precisely distinguish between semantic components that are harmful to detection and those that may still be beneficial. As our analysis suggests, semantic information and forgery-related cues are not fully independent; instead, they can be partially entangled in the learned feature space. Therefore, while the optimization process encourages the detector to move toward a more semantics-decoupled and artifact-oriented representation, the current projection-based formulation may still struggle to identify the optimal trade-off between semantic suppression and forgery-cue preservation. Future work could further improve this balance by explicitly modeling forgery-specific evidence, for example by introducing auxiliary objectives related to frequency-domain artifacts, local blending traces, or other manipulation-relevant cues.

\ifCLASSOPTIONcompsoc
\section*{Acknowledgments}
\else
\section*{Acknowledgment}
\fi
This work was supported in part by the National Natural Science Foundation of China (No. 62372402), and in part by the Zhejiang Provincial Natural Science Foundation of China under Grant No. LQN26F020059.
%

%
\bibliography{ref}

\clearpage
\section{Appendix}

\setcounter{section}{0}
\renewcommand{\thesection}{\Roman{section}}

\setcounter{theorem}{0}
\setcounter{proposition}{0}
\setcounter{assumption}{0}
\setcounter{corollary}{0}

\section{Robustness Evaluation}
To assess the resilience of our model against real-world image distortions, we follow the rigorous evaluation protocols established in prior studies~\cite{lips,effort,lsda,df1.0}. The assessment covers five distinct degradation categories: Color Saturation, JPEG Compression, Color Contrast, Block-wise artifacts, and Gaussian Blur. For each category, we apply distortions ranging from severity level 0 to 5 to measure performance stability. All models are trained on the FaceForensics++ (c23) dataset and evaluated using video-level AUC. We benchmark our approach against leading methods such as Face X-ray~\cite{face-x-ray}, SBI~\cite{sbi}, LipForensics~\cite{lips}, and FWA~\cite{fwa}. To ensure a controlled comparison, we also reproduced F3Net~\cite{f3net}, UCF~\cite{ucf}, and CORE~\cite{core} using \textbf{the same data augmentation pipelin}e employed in our framework.

The quantitative results are presented in Figure~\ref{A1-1}. As shown in the ``Average” plot, our method consistently secures the top performance across nearly all severity levels, outperforming both the standard baselines and the reproduced counterparts sharing identical augmentation schemes. While competitors experience sharp performance declines under severe distortions, specifically in JPEG compression and Gaussian blur where high-frequency details are compromised, our model exhibits superior stability. This evidence confirms that the robustness of our method is intrinsic to the learned generalized features rather than being a byproduct of data augmentation.

\begin{figure*}[b]
    \centering
    \includegraphics[width=0.98\linewidth]{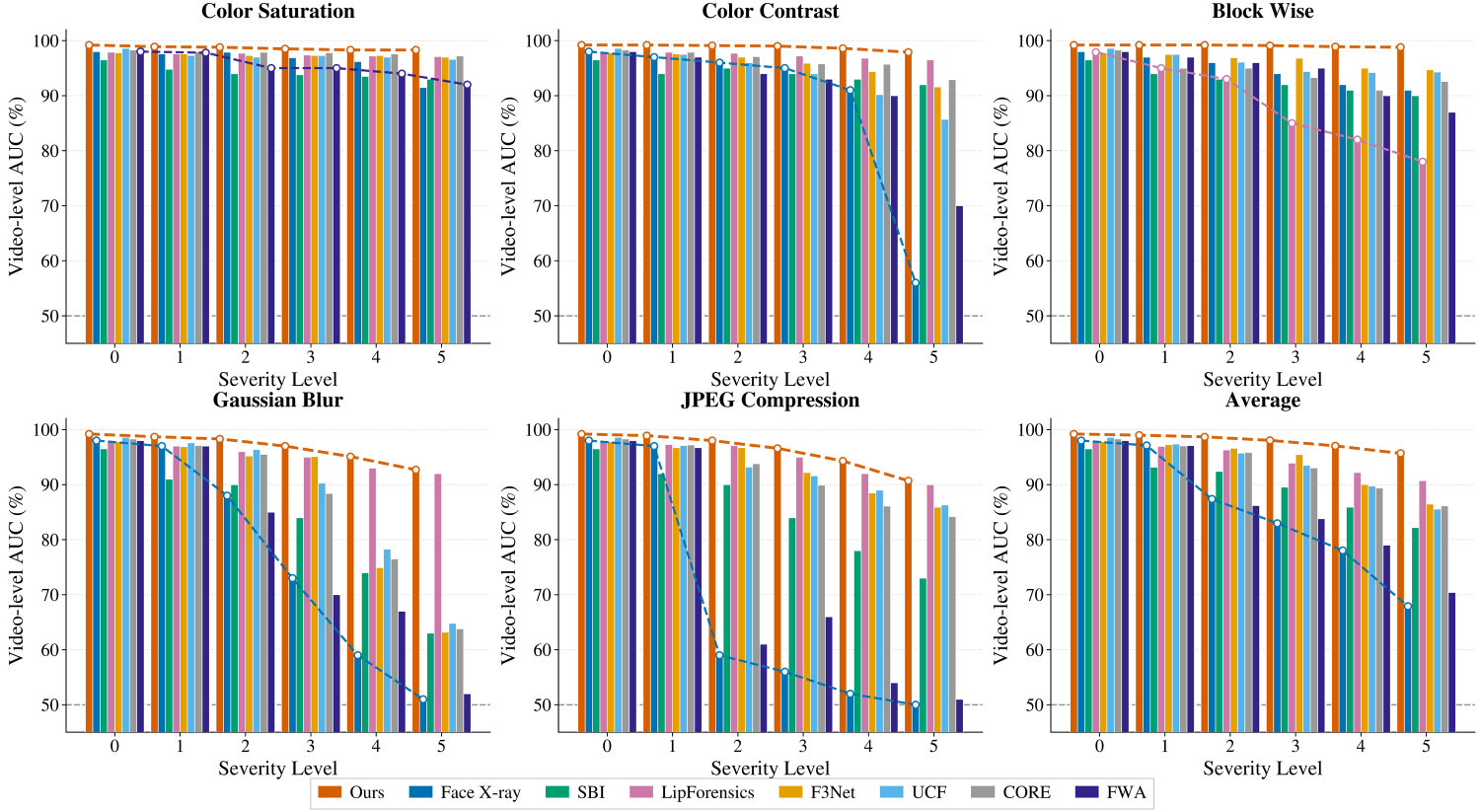}
    \caption{Robustness analysis against common image corruptions. The curves illustrate the Video-level AUC (\%) performance of our method versus existing approaches. We test across five different degradation scenarios, including Color Saturation, JPEG Compression, Color Contrast, Block-wise noise, and Gaussian Blur, plotted against increasing severity levels (0-5).}
    \label{A1-1}
\end{figure*}

\section{Feature Visualization of Different Reference Strategies}
To further examine whether different inference protocols preserve the semantic-decoupling effect of GSD, we provide 3D t-SNE visualizations for Per-sample Inference, Batch Inference, and Reference-based Inference in Figure~\ref{A1-3}. We consider both face-forgery detection and natural-image synthesis detection, covering in-domain settings such as FaceForensics++ and ProGAN, as well as cross-domain settings including CelebDF\_v2 and DFDC. For each protocol, we visualize two types of representations: the semantic components estimated by the frozen semantic anchor and the semantically decoupled forgery features produced by GSD.

As shown in the top row, the semantic components estimated by the three inference protocols occupy highly similar feature spaces. Despite differences in data domain and visual content, the semantic components consistently form structured manifolds, indicating that Per-sample, Batch, and Reference-based Inference recover comparable high-level semantic directions from the frozen CLIP anchor. This suggests that the dominant semantic subspace estimated by GSD remains stable across different inference protocols, which is consistent with our analysis in Section~\ref{sec:inference}. More importantly, the bottom row shows that after semantic decoupling, the forgery features produced by the three protocols are projected into an almost consistent forgery-oriented feature space. Compared with the original semantic components, the decoupled features are less organized by identity, object category, or scene-level semantics, and instead exhibit similar feature reorganization patterns across Per-sample, Batch, and Reference-based Inference. This indicates that Reference-based Inference preserves the essential semantic-suppression geometry of GSD, rather than inducing a substantially different representation space from the more exact Per-sample and Batch protocols.

These visualizations are consistent with the quantitative results reported in Section~\ref{sec6-3}, where the three inference protocols achieve comparable detection performance while exhibiting different computational costs. Per-sample Inference provides the most fine-grained sample-specific semantic estimation but requires per-sample SVD computation. Batch Inference improves efficiency by estimating a shared semantic basis within each mini-batch, making it suitable for dataset-level evaluation. Reference-based Inference further eliminates repeated test-time SVD by reusing a pre-constructed semantic reference space. Since all three strategies yield highly aligned semantic components and nearly unified decoupled forgery features, GSD can be deployed under different computational constraints while maintaining the learned forgery-oriented representation geometry.

\begin{figure*}[t]
    \centering
    \includegraphics[width=0.98\linewidth]{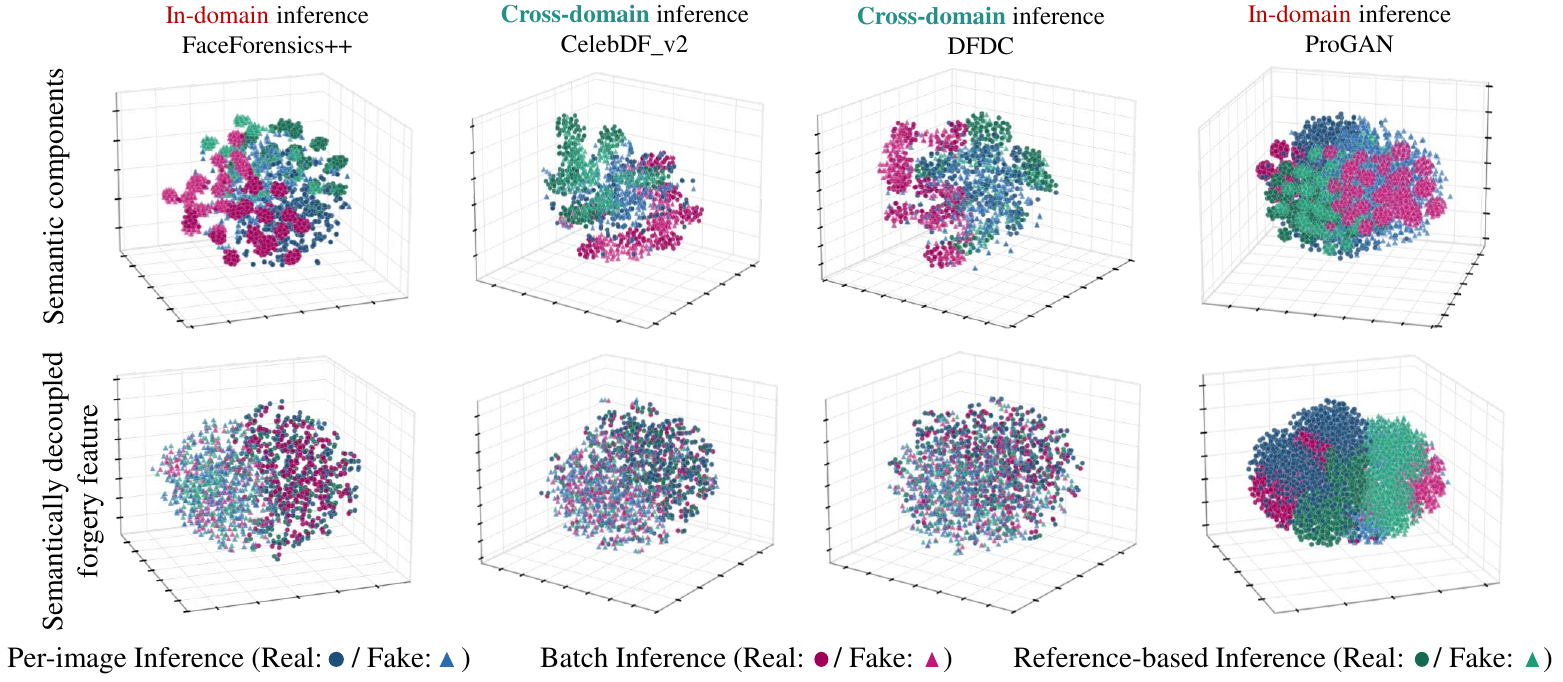}
    \caption{\textbf{Feature visualization of different GSD inference strategies.} We visualize the 3D t-SNE feature distributions under three deployment protocols, including Per-sample Inference, Batch Inference, and Reference-based Inference. The top row shows the semantic components preserved by the frozen semantic anchor, while the bottom row shows the semantically decoupled forgery features after GSD. Notably, across the three inference strategies, the semantic components exhibit highly similar feature spaces, while the semantically decoupled forgery features are mapped into an almost consistent forgery-oriented feature space.}
    \label{A1-3}
\end{figure*}

\section{More t-SNE Visualization}
\label{secA-3}
We provide additional t-SNE~\cite{tsne} visualizations of the naively fine-tuned CLIP and our GSD-CLIP to further examine whether semantic fallback is specific to face forgery or also appears in more general synthetic-image detection scenarios. As shown in Figure~\ref{A1-2}, we consider both face-forgery and natural-image synthesis settings. For face forgery, we include in-domain evaluation on FF++ and cross-domain evaluation on DFDC, FSGAN, and e4s. Beyond facial manipulation, we further analyze synthetic images from UniversalFakeDetect, including CycleGAN and StyleGAN subsets, where the semantic labels correspond to object or scene categories rather than face identities. Figures~\ref{A1-2}(a--c) show the feature distributions of the naively fine-tuned CLIP, while Figures~\ref{A1-2}(d--i) show the corresponding results after fine-tuning CLIP with our Geometric Semantic Decoupling module (\textbf{GSD}). 

From these visualizations, we make two key observations. First, the naively fine-tuned CLIP still exhibits a strong semantic organization in the feature space. Although the model has been fine-tuned for forgery detection, the right panels of Figures~\ref{A1-2}(a--c) show that samples with similar semantic labels tend to form compact local clusters. This phenomenon is visible not only in the face-forgery setting, where identity information remains a dominant factor, but also in the synthetic-image setting, where object-level categories continue to shape the representation geometry. Meanwhile, the real/fake separation in the left panels is not consistently aligned with these semantic clusters, especially under cross-domain evaluation. This indicates that downstream forensic fine-tuning does not fully reconstruct the feature space around manipulation-specific evidence. Instead, the high-level semantic structure inherited from CLIP pre-training remains a dominant geometric prior. We refer to this phenomenon as \textit{semantic fallback}: when facing forensic detection, the model tends to fall back to semantic concepts such as identity, object category, or scene type, rather than relying primarily on subtle manipulation artifacts.

Second, after introducing GSD, the feature space undergoes a clear reorganization. Compared with the naively fine-tuned CLIP, GSD-CLIP weakens compact semantic clustering and makes the representation less dominated by identity- or category-level semantics. This effect is particularly evident in the face-forgery datasets shown in Figures~\ref{A1-2}(d--g). After suppressing identity-dominant semantic directions, samples become less tightly grouped by face identity, and fake samples are more likely to deviate from their original identity-centered clusters. This suggests that authenticity-related cues, such as blending boundaries, texture discontinuities, and color inconsistencies~\cite{sbi, face-x-ray}, become more influential in shaping the learned representation.

For AI-generated natural images, the reorganization exhibits a slightly different pattern. As shown in Figures~\ref{A1-2}(h--i), both real and fake samples become less semantically compact after GSD, and the feature space becomes more dispersed rather than being reorganized into sharply separated semantic clusters. This is reasonable because synthetic-image artifacts are usually less spatially localized and less explicitly visible in the RGB domain than face-swapping artifacts. They may instead appear as subtle texture regularities, frequency-domain traces, or generator-specific distributional biases. Therefore, in natural-image synthesis detection, GSD mainly weakens the dominance of object- or scene-level clustering, allowing more diffuse authenticity-related variations to contribute to the feature geometry.

Overall, Figure~\ref{A1-2} shows that GSD reduces the dominance of inherited semantic structures and encourages the representation to rely less on identity, object category, or scene-level semantics. By explicitly decoupling semantic components from forensic features, GSD mitigates the semantic fallback of VFM-based detectors and yields a feature space that is more aligned with authenticity-related evidence, which is beneficial for cross-domain generalization.

\begin{figure*}[t]
    \centering
    \includegraphics[width=0.98\linewidth]{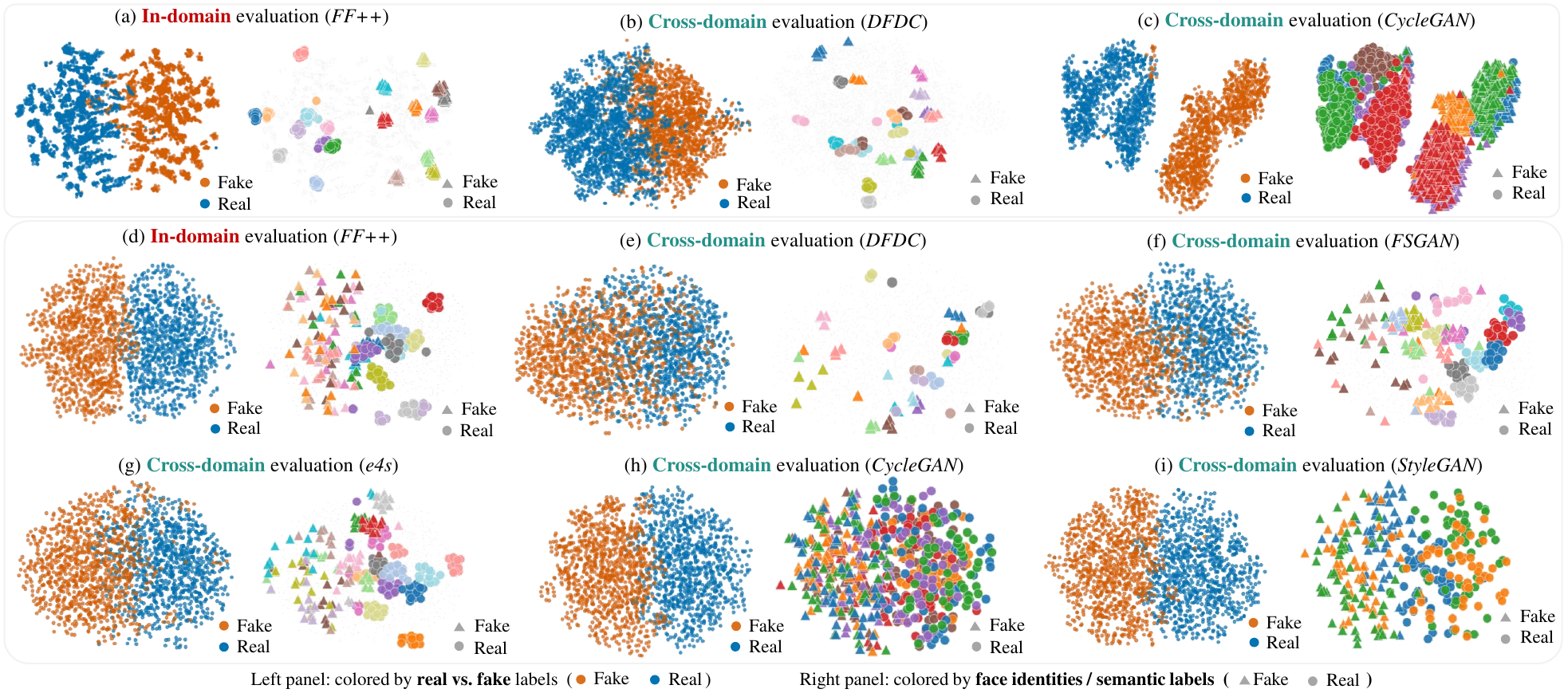}
    \caption{t-SNE~\cite{tsne} visualization of feature distributions extracted by the naively finetuned CLIP (a, b, c) and GSD-CLIP (d, e, f, g, h, i) on cross-domain datasets. Points are colored by forgery labels and semantic labels.}
    \label{A1-2}
\end{figure*}

\end{document}